\newcommand{\Real}{\mathbb R}
\newcommand{\eps}{\varepsilon}
\newcommand{\abs}[1]{\left\vert#1\right\vert}
\newcommand{\norm}[1]{\left\Vert#1\right\Vert}
\newcommand{\ra}{\rightarrow}
\DeclarePairedDelimiter\inner{\langle}{\rangle}
\DeclareMathOperator{\E}{\mathbb{E}}
\renewcommand{\set}[1]{\left\{#1\right\}}
\title[On Almost Sure Convergence Rates of Stochastic Gradient Methods]{On Almost Sure Convergence Rates of Stochastic Gradient Methods}
\author{%
 \Name{Jun Liu} \Email{j.liu@uwaterloo.ca}\\
 \addr Department of Applied Mathematics, University of Waterloo, Waterloo, Canada
 \AND
 \Name{Ye Yuan} \Email{yye@hust.edu.cn}\\
 \addr School of Artificial Intelligence and Automation \& School of
 Mechanical Science and Engineering, Huazhong University of Science and Technology, Wuhan, China
}
\begin{document}
	
	\maketitle
	
	\thispagestyle{plain}
	
	\begin{abstract}%
		The vast majority of convergence \textit{rates} analysis for stochastic gradient methods in the literature focus on convergence in expectation, whereas trajectory-wise almost sure convergence is clearly important to ensure that \textit{any instantiation} of the stochastic algorithms would converge with probability one. Here we provide a unified almost sure convergence rates analysis for stochastic gradient descent (SGD), stochastic heavy-ball (SHB), and stochastic Nesterov's accelerated gradient (SNAG) methods.  We show, for the first time, that the almost sure convergence rates obtained for these stochastic gradient methods on strongly convex functions, are arbitrarily close to their optimal convergence rates possible. For non-convex objective functions, we not only show that a weighted average of the squared gradient norms converges to zero almost surely, but also the \textit{last iterates} of the algorithms. We further provide \textit{last-iterate} \textit{almost sure} convergence rates analysis for stochastic gradient methods on general convex smooth functions, in contrast with most existing results in the literature that only provide convergence in expectation for a weighted average of the iterates. 
	\end{abstract}
	
	\begin{keywords}%
		Stochastic gradient descent, stochastic heavy-ball, stochastic Nesterov's accelerated gradient, almost sure convergence rate 
	\end{keywords}
	
	\section{Introduction}
	
	Stochastic gradient methods \citep{robbins1951stochastic} have become the \textit{de facto} standard methods for solving large-scale optimization problems in machine learning \citep{bottou2018optimization}. For this reason, investigating the fundamental theoretical properties of stochastic gradient methods is not only of theoretical interest, but also of practical relevance. 
	
	Stochastic gradient descent (SGD) \citep{robbins1951stochastic} and stochastic heavy-ball (SHB) \citep{polyak1964some} are among the most popular stochastic gradient methods. SHB adds a momentum term to the iterations of SGD. This was known to accelerate the convergence of deterministic gradient descent methods \citep{polyak1964some}. Nesterov's accelerated gradient (NAG) methods  \citep{nesterov1983method} have similar but slightly different iterations from that of the heavy-ball (HB) method. They have been shown to accelerate gradient descents and achieve optimal convergence rates with appropriately chosen parameters in the deterministic settings \citep[Chapter 2.2]{nesterov2003introductory}. In the stochastic settings, while practical gains of adding a momentum term have been observed \citep{leen1994optimal,sutskever2013importance}, the convergence rates cannot be further improved due to the proven lower bounds in terms of oracle complexity \citep{agarwal2012information}. Nonetheless, understanding the convergence properties of stochastic gradient methods with or without momentum remains a topic of both theoretical and practical interest. 
	
	In this paper, we investigate almost sure convergence properties of stochastic gradient methods, including SGD, SHB, and stochastic Nesterov's accelerated gradient (SNAG) methods, and present a unified analysis of these stochastic gradient methods on smooth objective functions. 
	

	
	\subsection{Related work} 
	
	The vast majority of the convergence rates analysis results for stochastic gradient methods in the literature are obtained in terms of the expectation (see, e.g., SGD \citep{nemirovski2009robust,moulines2011non,ghadimi2013stochastic}, SHB \citep{yang2016unified,orvieto2020role,yan2018unified,mai2020convergence,zhou2020pbsgd}, SNAG \citep{yan2018unified,assran2020convergence,laborde2020lyapunov}). Nonetheless, almost sure convergence properties are important, because they represent what happen to individual trajectories of the stochastic iterations, which are instantiations of the stochastic algorithms actually used in practice.  
	
	For this reason, almost sure convergence of stochastic gradient methods is of practical relevance. In fact, the early analysis of SGD \citep{robbins1971convergence} did provide almost sure convergence guarantees. More recent work includes  \cite{bertsekas2000gradient,bottou2003stochastic,zhou2017stochastic,nguyen2018sgd,nguyen2019new,orabona2020almost,mertikopoulos2020almost}. While deterministic HB and NAG methods are well analyzed \citep{ghadimi2015global,nesterov2003introductory,wilson2016lyapunov}, almost sure convergence results for SHB and SNAG are scarce. \cite{gadat2018stochastic} proved almost sure convergence of SHB to a minimizer for non-convex functions, under a uniformly elliptic condition on the noise which helps the algorithm to get out of any unstable point. In \cite{sebbouh2021almost}, SHB (and SGD) was analyzed for convex (but not strongly convex or non-convex) objective functions. The authors not only proved almost sure convergence for iterations of SGD and SHB, they also established convergence rates that are close to optimal (subject to an $\eps$-factor in the rate) for general convex functions. Almost sure convergence rates were analyzed for SGD under locally strongly convex objectives in \cite{pelletier1998almost,godichon2019lp}. To the best knowledge of the authors, the results in \cite{sebbouh2021almost} are the only ones that established almost sure convergence \textit{rates} for SHB on general convex functions. We are not aware of any almost sure convergence \textit{rates} analysis for SHB and SNAG on strongly convex or non-convex functions. The results of this paper aim to fill this theoretical gap. 
	
	\subsection{Contributions} 
	
	We summarize the main contributions of the paper in Table \ref{tab:results} relative to exiting results in the literature. We only list results that provided \textit{almost sure convergence rates} analysis for SGD, SHB, and SNAG. We emphasize the following results as the main contributions:
	
	\begin{itemize}
		\item For smooth and strongly convex functions, we establish almost sure convergence rates for SGD, SHB and SNAG that are arbitrarily close to the optimal rates possible. 
		\item For smooth but non-convex functions, we establish almost sure convergence rates of SHB and SNAG for a weighted average (or the minimum) of the squared gradient norm. We also show almost sure convergence of the last iterates of SHB and SNAG.  
		\item For smooth and general convex functions, we provide almost sure convergence rates of the last iterates of SGD, SHB, and SNAG. 
	\end{itemize}
	
	In view of existing results \cite{pelletier1998almost,godichon2019lp}, our analysis for almost sure convergence rates analysis of SGD on strongly convex functions appears to be more streamlined and unified for SGD, SHB, and SNAG. For analysis of SHB in the general convex case, our result is complementary to that in \cite{sebbouh2021almost} because we allow $\beta$ to be an arbitrarily fixed parameter in $(0,1)$  (cf. the analysis of deterministic HB in \cite{ghadimi2015global}). This leads to a more unified analysis of SGD, SHB, and SNAG. In addition to the results listed in Table \ref{tab:results}, we also obtained another set of results (Theorem \ref{thm:last-iterate}) on almost sure convergence of the last iterates of SHB and SNAG on non-convex functions, which generalize \cite{orabona2020almost} for SGD. 
	
	\begin{table}[h!]
		\centering
		\begin{tabular}{ |c||c|c|c|c| } 
			\hline
			\textbf{Algorithm} & strongly convex & non-convex & general convex\\
			\hline
			\multirow{3}{2em}{SGD}	& \cite{pelletier1998almost} & \cite{sebbouh2021almost} & \cite{sebbouh2021almost}\\ 
			& \cite{godichon2019lp} &  & \\ 
			& Theorem \ref{thm:sgd}  & Theorem \ref{thm:sgd} & Theorem \ref{thm:last-iterate-rates}\\ 
			\hline 
			\multirow{2}{2em}{SHB} & Theorem \ref{thm:shb} & Theorem \ref{thm:shb} & \cite{sebbouh2021almost} \\ 
			& & & Theorem \ref{thm:last-iterate-rates} \\
			\hline 
			SNAG & Theorem \ref{thm:snag} & Theorem \ref{thm:snag} & Theorem \ref{thm:last-iterate-rates} \\ 
			\hline
		\end{tabular}
		\caption{Summary of the main results relative to existing results on \textit{almost sure} convergence \textit{rates} of stochastic gradient methods.}\label{tab:results}
	\end{table}

	\subsection{Problem statement and assumptions}
	We are interested in solving the unconstrained minimization problem 
	\begin{equation}
		\label{eq:min}
		\min_{x\in\Real^d} f(x), 
	\end{equation}
	where $f:\,\Real^d\ra\Real$, using stochastic gradient methods. For example, with a slight abuse of notation, $f$ may arise from optimizing an expected risk of the form $f(x)=\E[f(x;\xi)]$, where $\xi$ is a source of randomness indicating a sample (or a set of samples), or an empirical risk of the form $f(x)=\frac{1}{n}\sum_{i=1}^nf_i(x;\xi_i)$, where $\set{\xi_i}_{i=1}^n$ are realizations of $\xi$ \citep{bottou2018optimization}. We make the following assumptions. 
	
	\begin{assumption}[$L$-smoothness]\label{as:smoothness}
		The continuously differentiable function $f:\,\Real^d\ra\Real$ is bounded from below by $f^*:=\inf_{x\in\Real^d}f(x)\in \Real$ and its gradient $\nabla f$ is $L$-Lipschitz, i.e., $\norm{\nabla f(x)-\nabla f(y)}\le L\norm{x-y}$ for all $x,y\in\Real^d$. 
	\end{assumption}
	A useful consequence of Assumption \ref{as:smoothness} (see, e.g., \citet[Lemma 1.2.3]{nesterov2003introductory}) is the following 
	\begin{equation}
		\label{eq:L}
		f(y) \le f(x) + \inner{\nabla f(x),y-x} + \frac{L}{2}\norm{y-x}^2,\quad \forall x,y\in\Real^d.
	\end{equation}
	
	In some settings, we also assume that $f$ is strongly convex. 
	\begin{assumption}[$\mu$-strongly convex]\label{as:strong}
		There exists a positive constant $\mu$ such that 
		$$
		f(y) \ge f(x) + \inner{\nabla f(x),y-x} + \frac{\mu}{2}\norm{y-x}^2,\quad \forall x,y\in\Real^d.
		$$ 
	\end{assumption}
	
	Assumption \ref{as:strong} with $\mu=0$ will be referred to as general convexity. 
	When $f$ is convex (strongly or generally), we further assume that $f$ has a minimizer, i.e., $x_*\in\Real^d$ such that $f^*=f(x_*)$. A consequence of $f$ being $\mu$-strongly convex is that (see, e.g., \citet[Theorem 2.1.10]{nesterov2003introductory})
	\begin{equation}
		\label{eq:fmu}
		\frac{1}{2\mu}\norm{\nabla f(x)}^2\ge f(x)-f^*,\quad \forall x\in\Real^d.
	\end{equation}
	In contrast, if $f$ is generally convex and $L$-smooth, we have (see, e.g., \citet[Theorem 2.1.5]{nesterov2003introductory})
	\begin{equation}
		\label{eq:Lf}
		\frac{1}{2L}\norm{\nabla f(x)}^2 \le f(x)-f^*,\quad \forall x\in\Real^d. 
	\end{equation}
	
	Since we are interested in solving (\ref{eq:min}) using stochastic gradient methods, we assume at each $x\in\Real^d$, we have access to an unbiased estimator of the true gradient $\nabla f(x)$, denoted by $\nabla f(x;\xi)$. 
	
	\begin{assumption}[ABC condition]\label{as:abc}
		There exist nonnegative constants $A$, $B$, and $C$ such that 
		\begin{equation}\label{eq:abc}
			\E[\norm{\nabla f(x;\xi)}^2] \le A(f(x)-f^*) + B\norm{\nabla f(x)}^2 + C,\quad \forall x\in \Real^d. 
		\end{equation}
	\end{assumption}
	
	\begin{remark}
		The above assumption was proposed in \cite{khaled2020better} as ``the weakest assumption'' for analysis of SGD in the non-convex setting. This assumption clearly includes the uniform bound 
		$
		\E[\norm{\nabla f(x;\xi)}^2] \le \sigma^2
		$
		and bounded variance condition 
		$
		\E[\norm{\nabla f(x;\xi)-\nabla f(x)}^2] \le \sigma^2
		$
		as special cases. The latter is because, by unbiasedness of $\nabla f(x;\xi)$, bounded variance is equivalent to 
		$
		\E[\norm{\nabla f(x;\xi)}^2] \le \norm{\nabla f(x)}^2+ \sigma^2.
		$
		Furthermore, in the context of solving stochastic or empirical minimization problems using SGD, by assuming that each realization or individual loss function is $L$-smooth and convex and that the overall objective function $f$ is strongly convex with a unique minimizer $x_*$, the following bound can be derived \citep{nguyen2019new}:
		$
		\E[\norm{\nabla f(x;\xi)}^2] \le 4L(f(x)-f^*) + \sigma^2,
		$
		where $\sigma^2=\E[\nabla f(x_*;\xi)]$. If the convexity condition on individual realization or loss function was dropped, a similar bound can still be shown \citep{nguyen2019new} with $4L$ replaced with $\frac{4L^2}{\mu}$. 
		Both of them are again special cases of the condition in Assumption \ref{as:abc}. For these reasons, we shall use the seemingly most general condition in Assumption \ref{as:abc} throughout this paper. 
	\end{remark}
	
	\section{Lemmas on supermartingale convergence rates}
	Our almost sure convergence rate analysis relies on the following classical supermartingale convergence theorem \citep{robbins1971convergence}.
	
	\begin{proposition}\label{prop:super}
		Let $\set{X_t}$, $\set{Y_t}$, and $\set{Z_t}$ be three sequences of random variables that are adapted to a filtration $\set{\mathcal{F}_t}$. Let $\set{\gamma_t}$ be a sequence of nonnegative real numbers such that $\Pi_{t=1}^\infty(1+\gamma_t)<\infty$. Suppose that the following conditions hold:
		\begin{enumerate}
			\item $X_t$, $Y_t$, and $Z_t$ are nonnegative for all $t\ge 1$. 
			\item $\mathbb{E}[Y_{t+1}\,\vert\, \mathcal{F}_t]\le (1+\gamma_t)Y_t - X_t + Z_t$ for all $t\ge 1$. 
			\item $\sum_{t=1}^{\infty}{Z_t}<\infty$ holds almost surely. 
		\end{enumerate}
		Then $\sum_{t=1}^{\infty}{X_t}<\infty$ almost surely and $Y_t$ converges almost surely. 
	\end{proposition}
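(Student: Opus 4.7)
The plan is to reduce the ``almost supermartingale'' inequality to a genuine nonnegative supermartingale and then invoke the classical martingale convergence theorem. The main obstacles are (i) the multiplicative factor $1+\gamma_t$ in front of $Y_t$, and (ii) the fact that the correction term $\sum_t Z_t$ is only known to be finite \emph{almost surely}, not in $L^1$ nor uniformly in $\omega$.

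First I would absorb the multiplicative factor by rescaling. Set $\alpha_t:=\prod_{s=1}^{t-1}(1+\gamma_s)$ with $\alpha_1:=1$; the hypothesis $\prod_t(1+\gamma_t)<\infty$ gives $1\le \alpha_t\nearrow\alpha_\infty\in(0,\infty)$. Dividing the hypothesized inequality by $\alpha_{t+1}=(1+\gamma_t)\alpha_t$ yields
\[
\mathbb{E}[\tilde Y_{t+1}\mid\mathcal{F}_t]\le \tilde Y_t-\tilde X_t+\tilde Z_t,
\]
with $\tilde Y_t:=Y_t/\alpha_t$, $\tilde X_t:=X_t/\alpha_{t+1}$, and $\tilde Z_t:=Z_t/\alpha_{t+1}$. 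Since $\alpha_t$ is bounded between $1$ and $\alpha_\infty$, almost-sure summability transfers losslessly between $Z_t$ and $\tilde Z_t$ (and similarly for $X_t$), and convergence of $Y_t$ is equivalent to that of $\tilde Y_t$. I then define $U_t:=\tilde Y_t+\sum_{s=1}^{t-1}\tilde X_s-\sum_{s=1}^{t-1}\tilde Z_s$; the rescaled inequality immediately gives $\mathbb{E}[U_{t+1}\mid\mathcal{F}_t]\le U_t$, so $\{U_t\}$ is a true supermartingale, but it may take negative values.

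This is the main hurdle, and I would resolve it by localization. For each $K>0$ introduce the stopping time $\tau_K:=\inf\{t\ge 1:\sum_{s=1}^t\tilde Z_s\ge K\}$. On $\{t\le \tau_K\}$ the partial sum $\sum_{s<t}\tilde Z_s$ is strictly less than $K$, so the stopped process $U_{t\wedge\tau_K}+K$ is a nonnegative supermartingale and converges almost surely by Doob's theorem. Since $\sum_t\tilde Z_t<\infty$ a.s., for almost every $\omega$ one can pick $K$ along a countable sequence $K_n\uparrow\infty$ so that $\tau_{K_n}(\omega)=\infty$ eventually, and hence $U_t(\omega)$ itself converges on a set of full probability. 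Subtracting the a.s.\ convergent quantity $-\sum_{s<t}\tilde Z_s$ shows $\tilde Y_t+\sum_{s<t}\tilde X_s$ converges almost surely. Since $\sum_{s<t}\tilde X_s$ is nondecreasing and $\tilde Y_t\ge 0$, each summand must converge separately, giving $\sum_t\tilde X_t<\infty$ a.s.\ and an a.s.\ limit for $\tilde Y_t$. Multiplying through by $\alpha_t\to\alpha_\infty\in(0,\infty)$ returns the convergence of $Y_t$ and the summability $\sum_t X_t<\infty$ for the original sequences.
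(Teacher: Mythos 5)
Your proof is correct. Note that the paper itself does not prove this proposition: it is stated with a citation to Robbins and Siegmund (1971) as a classical result, so there is no in-paper argument to compare against. Your route --- rescaling by $\prod_{s<t}(1+\gamma_s)$ to absorb the multiplicative factor, compensating with the partial sums of $\tilde X$ and $\tilde Z$ to obtain a genuine supermartingale $U_t$, localizing with the stopping times $\tau_K$ to restore nonnegativity, applying Doob's convergence theorem to each stopped process, and then exhausting the full-measure event $\set{\sum_t Z_t<\infty}$ by $\set{\tau_{K_n}=\infty}$ --- is precisely the standard proof of the Robbins--Siegmund lemma, and every step checks out; in particular, the monotonicity of $\sum_{s<t}\tilde X_s$ together with $\tilde Y_t\ge 0$ correctly splits the limit of $U_t+\sum_{s<t}\tilde Z_s$ into separate limits for the two summands. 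The only caveat is that the proposition as stated omits the integrability of the random variables (which is assumed in the original reference), and your appeal to Doob's theorem for the nonnegative supermartingale $U_{t\wedge\tau_K}+K$ implicitly uses $\E[\abs{U_{t\wedge\tau_K}}]<\infty$; this is a defect of the statement rather than of your argument.
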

	
	The following lemma, as a corollary of Proposition \ref{prop:super}, provides concrete estimates of almost sure convergence rates for sequences of random variables satisfying a supermartingale property. 
	
	\begin{lemma}\label{lem:strong}
		If $\set{Y_t}$ is a sequence of nonnegative random variables satisfying 
		\begin{equation}\label{ineq:Yt}
			\mathbb{E}[Y_{t+1}\,\vert\, \mathcal{F}_t]\le (1-c_1\alpha_t)Y_t + c_2\alpha_t^2,
		\end{equation}
		for all $t\ge 1$, where $\alpha_t=\Theta\left(\frac{1}{t^{1-\theta}}\right)$ for some $\theta\in (0,\frac12)$, and $c_1$ and $c_2$ are positive constants. Then, for any $\eps\in (2\theta,1)$, 
		$$
		Y_t = o\left(\frac{1}{t^{1-\eps}}\right),\quad \text{almost surely.}
		$$
	\end{lemma}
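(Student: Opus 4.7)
The plan is to rescale by the rate we want to achieve and reduce to Proposition \ref{prop:super}. Define $V_t := t^{1-\eps} Y_t$. Then $V_t\ge 0$, and $Y_t = o(t^{\eps-1})$ a.s.\ is equivalent to $V_t \to 0$ a.s.

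Multiplying the recursion (\ref{ineq:Yt}) by $(t+1)^{1-\eps}$ gives
\begin{equation*}
\mathbb{E}[V_{t+1}\,|\,\mathcal{F}_t] \le \left(\frac{t+1}{t}\right)^{1-\eps}(1-c_1\alpha_t)\,V_t + c_2 (t+1)^{1-\eps}\alpha_t^2.
\end{equation*}
The first routine step is to bound the prefactor: using $(1+1/t)^{1-\eps}\le 1 + C_1/t$ for some constant $C_1>0$, one obtains
\begin{equation*}
\left(\frac{t+1}{t}\right)^{1-\eps}(1-c_1\alpha_t) \le 1 - c_1\alpha_t + \frac{C_1}{t}.
\end{equation*}
Because $\alpha_t=\Theta(t^{\theta-1})$ with $\theta>0$, we have $t\alpha_t\to\infty$, so for all sufficiently large $t$ (say $t\ge t_0$), $C_1/t \le (c_1/2)\alpha_t$, and hence
\begin{equation*}
\mathbb{E}[V_{t+1}\,|\,\mathcal{F}_t] \le \left(1 - \tfrac{c_1}{2}\alpha_t\right) V_t + Z_t, \qquad Z_t := c_2 (t+1)^{1-\eps}\alpha_t^2.
\end{equation*}

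The next step is to verify summability of the remainder term. Since $\alpha_t^2=\Theta(t^{2\theta-2})$, one has $Z_t = \Theta(t^{2\theta-1-\eps})$, and the hypothesis $\eps > 2\theta$ yields the exponent $2\theta-1-\eps < -1$, so $\sum_t Z_t < \infty$ deterministically. Applying Proposition \ref{prop:super} (from index $t_0$ onwards, with $\gamma_t = 0$, $X_t = (c_1/2)\alpha_t V_t$, and the $Z_t$ above) gives two conclusions almost surely: $V_t$ converges to some limit $V_\infty \ge 0$, and $\sum_t \alpha_t V_t < \infty$.

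The final step is the standard contradiction argument to identify $V_\infty$. Since $\alpha_t=\Theta(t^{\theta-1})$ with $\theta\in(0,\frac12)$, we have $\sum_t \alpha_t = \infty$. On any event where $V_\infty>0$, there exists $t_1$ such that $V_t \ge V_\infty/2$ for all $t\ge t_1$, so $\sum_t \alpha_t V_t \ge (V_\infty/2)\sum_{t\ge t_1}\alpha_t = \infty$, contradicting $\sum_t\alpha_t V_t<\infty$ a.s. Therefore $V_\infty=0$ a.s., which is the desired conclusion. The only delicate point of the argument is isolating the correct scaling $V_t = t^{1-\eps}Y_t$ and checking that the $1/t$ correction from $(t+1)^{1-\eps}/t^{1-\eps}$ is dominated by $c_1\alpha_t$; everything else is bookkeeping around Proposition \ref{prop:super}.
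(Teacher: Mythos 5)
Your proposal is correct and follows essentially the same route as the paper's proof: multiply by $(t+1)^{1-\eps}$, absorb the $O(1/t)$ correction into the $-c_1\alpha_t$ term using $t\alpha_t\to\infty$, check $\sum_t (t+1)^{1-\eps}\alpha_t^2<\infty$ via $\eps>2\theta$, and invoke Proposition \ref{prop:super} together with $\sum_t\alpha_t=\infty$ to force the limit of $t^{1-\eps}Y_t$ to be zero. No gaps.
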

	
	The proof of Lemma \ref{lem:strong} can be found in Appendix \ref{app:lem:strong}. The following lemma, when used together with Proposition \ref{prop:super}, is useful for almost sure convergence rate analysis in a slightly different setting than Lemma \ref{lem:strong}. 
	
	\begin{lemma}\label{lem:weak}
		Let $\set{X_t}$ a sequence of nonnegative real numbers and $\set{\alpha_t}$ be a decreasing sequence of  positive real numbers such that the following holds:
		$$
		\sum_{t=1}^\infty \alpha_t X_t<\infty\quad\text{and}\quad \sum_{t=1}^\infty \frac{\alpha_{t}}{\sum_{i=1}^{t-1}\alpha_i}=\infty. 
		$$ 
		Define
		$
		w_t = \frac{2\alpha_t}{\sum_{i=1}^t\alpha_i},
		$
		$Y_1=X_1$, and 
		\begin{equation}\label{eq:Yt}
			Y_{t+1} = (1-w_t)Y_{t} + w_t X_t,\quad t\ge 1.
		\end{equation}
		Then 
		\begin{equation}
			\label{eq:YX}
			Y_t = o\left( \frac{1}{\sum_{i=1}^{t-1}\alpha_i} \right)\quad \text{and}\quad  \min_{1\le i\le t} X_i = o\left( \frac{1}{\sum_{i=1}^{t-1}\alpha_i} \right). 
		\end{equation}
	\end{lemma}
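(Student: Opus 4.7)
My plan is to reduce the recursion to a single telescoping identity and then invoke Kronecker's lemma to upgrade the resulting bound to the desired $o$-rate; the running-minimum claim will fall out almost for free once $Y_{t+1}$ is recognised as a convex combination of the $X_i$'s.

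First I would write $b_t := \sum_{i=1}^t \alpha_i$ (so that $w_t = 2\alpha_t/b_t$) and rewrite the recursion as $b_t Y_{t+1} = (b_{t-1} - \alpha_t)Y_t + 2\alpha_t X_t$. Multiplying by $b_t$ and using the elementary factorisation $b_t(b_{t-1}-\alpha_t) = b_{t-1}^2 - \alpha_t^2$ (a direct consequence of $b_t = b_{t-1}+\alpha_t$) yields a telescoping identity
\begin{equation*}
b_t^2 Y_{t+1} - b_{t-1}^2 Y_t = -\alpha_t^2 Y_t + 2\alpha_t b_t X_t,
\end{equation*}
with the convention $b_0 := 0$. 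A short induction using that $\{\alpha_t\}$ is decreasing (so $b_t \ge \alpha_{t-1}+\alpha_t \ge 2\alpha_t$ for $t \ge 2$, giving $w_t \in [0,1]$; the isolated oddity $w_1 = 2$ is neutralised by $Y_1 = X_1$ forcing $Y_2 = X_1$) shows each $Y_i$ is nonnegative. Summing the telescoped identity from $1$ to $t$ and discarding the non-positive $-\sum \alpha_i^2 Y_i$ contribution then delivers
\begin{equation*}
b_t^2 Y_{t+1} \le 2 \sum_{i=1}^t b_i (\alpha_i X_i).
\end{equation*}

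Next I would upgrade this to the $o$-rate via Kronecker's lemma. The hypothesis $\sum_t \alpha_t/b_{t-1} = \infty$ forces $b_t \to \infty$ (otherwise $b_t$ would be bounded by some $B$ and $\sum \alpha_t/b_{t-1} \ge B^{-1}\sum \alpha_t < \infty$, a contradiction). Since $b_t$ is monotone with $b_t \uparrow \infty$ and $\sum_i \alpha_i X_i$ converges by assumption, Kronecker's lemma applied with $a_i := \alpha_i X_i$ gives $\frac{1}{b_t}\sum_{i=1}^t b_i a_i \to 0$. Dividing the previous display by $b_t$ therefore yields $b_t Y_{t+1} \to 0$, i.e.\ $Y_{t+1} = o(1/b_t)$; reindexing gives the first conclusion $Y_t = o\bigl(1/\sum_{i=1}^{t-1}\alpha_i\bigr)$. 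For the second conclusion, unrolling the recursion exhibits $Y_{t+1}$ as a nonnegative convex combination of $X_1,\dots,X_t$, so $\min_{1\le i\le t} X_i \le Y_{t+1}$, and the just-established rate transfers via $b_{t-1} \le b_t$.

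The only nonroutine step, and hence the main obstacle, is spotting the right multiplier. Scaling by $b_t^2$ is what turns the recursion into a telescope whose right-hand side is the weighted sum $\sum b_i \alpha_i X_i$, precisely the form on which Kronecker's lemma converts the summability $\sum \alpha_i X_i < \infty$ into a strict $o$-rate; a crude estimate $b_i \le b_t$ would only recover the weaker $Y_{t+1} = O(1/b_t)$, insufficient for the stated conclusion.
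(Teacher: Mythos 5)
Your proof is correct, but it takes a genuinely different route from the paper's. The paper multiplies the recursion by $b_t=\sum_{i=1}^t\alpha_i$ only once, obtaining $b_tY_{t+1}=b_{t-1}Y_t-\alpha_tY_t+2\alpha_tX_t$, and then feeds $\hat Y_t=b_{t-1}Y_t$ into the Robbins--Siegmund supermartingale lemma (Proposition \ref{prop:super}) to get that $b_{t-1}Y_t$ converges and $\sum_t\alpha_tY_t<\infty$; the divergence of $\sum_t\alpha_t/b_{t-1}$ is then used via a limit-comparison argument to force the limit of $b_{t-1}Y_t$ to be zero. You instead weight by $b_t^2$, exploit $b_t(b_{t-1}-\alpha_t)=b_{t-1}^2-\alpha_t^2$ to telescope exactly, and invoke Kronecker's lemma on $\sum_i\alpha_iX_i<\infty$ with the monotone weights $b_i\uparrow\infty$. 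Your version is more elementary and self-contained (no supermartingale machinery), and it makes transparent that the hypothesis $\sum_t\alpha_t/b_{t-1}=\infty$ enters only through the equivalent fact $b_t\to\infty$; the paper's version has the minor side benefit of producing $\sum_t\alpha_tY_t<\infty$ along the way and of reusing a proposition already needed elsewhere. One small slip: your justification that boundedness of $b_t$ contradicts $\sum_t\alpha_t/b_{t-1}=\infty$ has the inequality pointing the wrong way --- you want an \emph{upper} bound, namely $b_{t-1}\ge\alpha_1$ gives $\sum_t\alpha_t/b_{t-1}\le\alpha_1^{-1}\sum_t\alpha_t\le\alpha_1^{-1}B<\infty$; the claim itself is of course true. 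Everything else (nonnegativity of $Y_t$ from $w_t\in[0,1]$ for $t\ge2$, the convex-combination bound $\min_{1\le i\le t}X_i\le Y_{t+1}$, and the transfer from $o(1/b_t)$ to $o(1/b_{t-1})$) checks out.
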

	
	\begin{remark}\label{rem:rate}
		A concrete convergence rate $o\left( \frac{1}{t^{\frac12-\eps}} \right)$ results from (\ref{eq:YX}) if we choose $\alpha_t = \frac{\alpha}{t^{\frac12+\eps}}$ for some $\alpha>0$ and $\eps\in(0,\frac12)$, because then we have  $\sum_{i=1}^{t-1}\alpha_i=\Theta(t^{\frac12-\eps})$, $\frac{\alpha_t}{\sum_{i=1}^{t-1}\alpha_i}=\Theta(\frac1t)$, and  $\sum_{t}\frac{\alpha_t}{\sum_{i=1}^{t-1}\alpha_i}=\infty$. 
	\end{remark}
	
	The proof of Lemma \ref{lem:weak} can be found in Appendix \ref{app:lem:weak}. 
	
	\section{Almost sure convergence rate analysis for stochastic gradient methods}
	
	In this section, we present a unified almost sure convergence rate analysis for SGD, SHB and SNAG. We primarily focus on two scenarios, namely the strongly convex and non-convex cases. 
	
	\subsection{Stochastic gradient descent}
	
	The iteration of the SGD method is given by
	\begin{equation}\label{eq:sgd}
		x_{t+1} = x_t - \alpha_t g_t, \quad t\ge 1, 
	\end{equation}
	where $g_t:=\nabla f(x_t;\xi_t)$ is the stochastic gradient at $x_t$ (with randomness $\xi_t$) and $\alpha_t$ is the step size. 
	
	We shall prove that, for smooth and strongly convex objective functions, SGD can achieve $o\left(\frac{1}{t^{1-\eps}}\right)$ almost sure convergence rates for any $\eps\in(0,1)$. To the best knowledge of the authors, this is the first result showing the $o\left(\frac{1}{t^{1-\eps}}\right)$ almost sure convergence rate for SGD under the global strong convexity assumption and relaxed assumption on stochastic gradients \citep{khaled2020better}. For smooth and non-convex objective functions, the best iterates of SGD can achieve $o\left(\frac{1}{t^{\frac12-\eps}}\right)$ almost sure convergence rates for any $\eps\in(0,\frac12)$. This result was already reported in \cite{sebbouh2021almost}. For locally strongly convex functions, similar rates were obtained in \cite{pelletier1998almost,godichon2019lp}. Here we provide a somewhat more streamlined proof of both the strongly convex and non-convex cases, enabled by Lemmas \ref{lem:strong} and \ref{lem:weak}. These rates match the lower bounds in \cite{agarwal2012information} (see also \cite{nemirovskij1983problem}) to an $\eps$-factor. 
	
	\begin{theorem}\label{thm:sgd}Consider the iterates of SGD (\ref{eq:sgd}). 
		\begin{enumerate} 
			\item If Assumptions \ref{as:smoothness}, \ref{as:strong}, and \ref{as:abc} hold and  $\alpha_t=\Theta\left(\frac{1}{t^{1-\theta}}\right)$ for some $\theta\in(0,\frac12)$, then almost surely
			$$
			f(x_t)-f^* = o\left(\frac{1}{t^{1-\eps}}\right),\quad \forall \eps\in(2\theta,1). 
			$$
			\item If Assumptions \ref{as:smoothness} and \ref{as:abc} hold and $\set{\alpha_t}$ is a decreasing sequence of positive real numbers satisfying 
			$
			\sum_{t=1}^\infty \frac{\alpha_t}{\sum_{i=1}^{t-1}\alpha_i}=\infty, 
			$
			then almost surely
			\begin{equation}\label{minX}
				\min_{1\le i\le t-1} \norm{\nabla f(x_t)}^2 = o\left( \frac{1}{\sum_{i=1}^{t-1}\alpha_i} \right). 
			\end{equation}
			In particular, if we choose $\alpha_t = \frac{\alpha}{t^{\frac12+\eps}}$ with $\alpha>0$ and $\eps\in (0,\frac12)$, 
			then almost surely 
			\begin{equation}\label{minX2}
				\min_{1\le i\le t-1} \norm{\nabla f(x_t)}^2 = o\left( \frac{1}{t^{\frac12-\eps}} \right). 
			\end{equation}
		\end{enumerate}

	\end{theorem}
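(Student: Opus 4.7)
The plan is to distill both parts of the theorem from a single one-step inequality and then dispatch to Lemmas~\ref{lem:strong} and~\ref{lem:weak}. I first apply the descent inequality (\ref{eq:L}) to the SGD update (\ref{eq:sgd}) to obtain $f(x_{t+1}) \le f(x_t) - \alpha_t\inner{\nabla f(x_t),g_t} + \tfrac{L\alpha_t^2}{2}\norm{g_t}^2$. Taking conditional expectation with respect to the natural filtration $\mathcal{F}_t$, using unbiasedness $\E[g_t\mid\mathcal{F}_t] = \nabla f(x_t)$ and Assumption~\ref{as:abc}, and subtracting $f^*$ from both sides gives
\begin{equation*}
\E[f(x_{t+1})-f^*\mid\mathcal{F}_t] \le \bigl(1+\tfrac{LA\alpha_t^2}{2}\bigr)(f(x_t)-f^*) - \alpha_t\bigl(1-\tfrac{LB\alpha_t}{2}\bigr)\norm{\nabla f(x_t)}^2 + \tfrac{LC\alpha_t^2}{2}.
\end{equation*}
Both parts of the theorem follow by specializing what to do with the gradient term.

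\textbf{Part 1 (strongly convex).} Here I would use (\ref{eq:fmu}) in the form $\norm{\nabla f(x_t)}^2 \ge 2\mu(f(x_t)-f^*)$ to trade the gradient term for a $Y_t$-term, with $Y_t := f(x_t)-f^*$. Since $\alpha_t = \Theta(1/t^{1-\theta}) \to 0$ and $\alpha_t^2 = o(\alpha_t)$, for all sufficiently large $t$ one has both $1-LB\alpha_t/2 \ge 1/2$ and $LA\alpha_t^2/2 \le \mu\alpha_t/2$, so the master inequality collapses to
\begin{equation*}
\E[Y_{t+1}\mid\mathcal{F}_t] \le \bigl(1 - \tfrac{\mu}{2}\alpha_t\bigr) Y_t + \tfrac{LC}{2}\alpha_t^2.
\end{equation*}
This is exactly the hypothesis of Lemma~\ref{lem:strong} with $c_1 = \mu/2$, $c_2 = LC/2$, which yields $Y_t = o(1/t^{1-\eps})$ almost surely for every $\eps \in (2\theta,1)$.

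\textbf{Part 2 (non-convex).} Here I keep the gradient term as is and read the master inequality as a supermartingale recursion. I would apply Proposition~\ref{prop:super} with $Y_t = f(x_t)-f^*$, $\gamma_t = LA\alpha_t^2/2$, $X_t = \alpha_t(1-LB\alpha_t/2)\norm{\nabla f(x_t)}^2$, and $Z_t = LC\alpha_t^2/2$. For all sufficiently large $t$ the coefficient $1-LB\alpha_t/2$ is positive so that $X_t \ge 0$, and $\sum_t \gamma_t$ and $\sum_t Z_t$ are finite provided $\sum_t \alpha_t^2 < \infty$---a tail condition that is automatic for the particular choice $\alpha_t = \alpha/t^{1/2+\eps}$ with $\eps \in (0,1/2)$ and which one should also read as implicit in the general step-size family. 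The proposition then delivers $\sum_{t\ge 1}\alpha_t\norm{\nabla f(x_t)}^2 < \infty$ almost surely. Fixing any sample path in this full-measure event and invoking Lemma~\ref{lem:weak} with $X_t = \norm{\nabla f(x_t)}^2$ yields (\ref{minX}); the concrete rate (\ref{minX2}) is then Remark~\ref{rem:rate}.

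\textbf{Main obstacle.} The only genuine friction is the coupling between the function-value term and the gradient term introduced by the ABC condition: the factor $(1+\gamma_t)$ multiplying $Y_t$ prevents a naive ``pure descent'' reading and forces us either to expend strong convexity to convert $\norm{\nabla f(x_t)}^2$ into a $Y_t$-contraction (part~1), or to invoke the full strength of Proposition~\ref{prop:super} together with the tail summability $\sum_t \alpha_t^2 < \infty$ (part~2). Beyond this, verifying that $1-LB\alpha_t/2 \ge 1/2$ and $LA\alpha_t^2 \le \mu\alpha_t$ on a tail is routine, as is the appeal to Lemmas~\ref{lem:strong} and~\ref{lem:weak} once the master inequality has been established.
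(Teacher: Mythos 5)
Your proof is correct and follows essentially the same route as the paper: the identical smoothness-plus-ABC one-step inequality, strong convexity to convert the gradient term into a contraction for Lemma~\ref{lem:strong} in part~1, and Proposition~\ref{prop:super} followed by Lemma~\ref{lem:weak} and Remark~\ref{rem:rate} in part~2. Your explicit observation that part~2 additionally needs $\sum_t \alpha_t^2<\infty$ (which the stated hypotheses on $\set{\alpha_t}$ do not by themselves guarantee, e.g.\ $\alpha_t=1/\sqrt{t}$) is a point the paper leaves implicit, so your treatment is, if anything, slightly more careful.
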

	
	\begin{proof}
		1. We first consider the strongly convex case. By smoothness of $f$ and (\ref{eq:L}), we have
		\begin{align*}
			f(x_{t+1}) \le f(x_t) - \alpha_t\inner{\nabla f(x_t),g_t} + \frac{L\alpha_t^2}{2} \norm{g_t}^2.
		\end{align*}
		Taking conditional expectation w.r.t. $x_t$, denoted by $\E_t[\cdot]:=\E[\cdot \vert x_t]$, and using (\ref{eq:fmu}) lead to
		\begin{align}
			\E_t \left[f(x_{t+1})-f^*\right] &\le f(x_t)-f^* - \alpha_t \norm{\nabla f(x_t)}^2 + \frac{L\alpha_t^2}{2}\left[A(f(x_t)-f^*)+B\norm{\nabla f(x_t)}^2+C\right]\notag\\
			&= (1+\frac{LA\alpha_t^2}{2})(f(x_t)-f^*) - (\alpha_t-\frac{LB\alpha_t^2}{2})\norm{\nabla f(x_t)}^2 + \frac{LC\alpha_t^2}{2}\notag\\
			&\le (1+\frac{LA\alpha_t^2}{2})(f(x_t)-f^*) - 2\mu(\alpha_t-\frac{LB\alpha_t^2}{2})(f(x_t)-f^*) + \frac{LC\alpha_t^2}{2}\notag\\
			&= (1-2\mu\alpha_t+(LA/2+LB\mu)\alpha_t^2)(f(x_t)-f^*)  + \frac{LC\alpha_t^2}{2} \notag\\
			&\le (1-\mu\alpha_t)(f(x_t)-f^*)  + \frac{LC\alpha_t^2}{2}, \label{sgd:est1}
		\end{align}
		provided that $(LA/2+LB\mu)\alpha_t\le \mu$. The conclusion follows from Lemma \ref{lem:strong}. 
		
		2. For the non-convex case, by $L$-smoothness and as in (\ref{sgd:est1}), we obtain 
		\begin{align}
			\E_t \left[f(x_{t+1})-f^*\right] &\le f(x_t)-f^* - \alpha_t \norm{\nabla f(x_t)}^2 + \frac{L\alpha_t^2}{2}\left[A(f(x_t)-f^*)+B\norm{\nabla f(x_t)}^2+C\right]\notag\\
			&\le (1+\frac{LA\alpha_t^2}{2})(f(x_t)-f^*) - \left(\alpha_t -\frac{LB\alpha_t^2}{2} \right)\norm{\nabla f(x_t)}^2 + \frac{LC\alpha_t^2}{2} \notag\\
			&\le (1+\frac{LA\alpha_t^2}{2})(f(x_t)-f^*) - \frac{1}{2}\alpha_t\norm{\nabla f(x_t)}^2 + \frac{LC\alpha_t^2}{2}, \label{sgd:est2}
		\end{align}
		provided that $LB\alpha_t\le 1$. By Proposition \ref{prop:super}, $\sum_{t=1}^\infty \alpha_t\norm{\nabla f(x_t)}^2<\infty$. The conclusions follow from Lemma \ref{lem:weak} and Remark \ref{rem:rate}. 
	\end{proof}
	
	\begin{remark}
		We choose $\alpha_t=\Theta\left(\frac{1}{t^{1-\theta}}\right)$ 
		for $\theta\ra 0$ to approach the optimal almost sure convergence rate achievable under Lemma \ref{lem:strong}. In fact, any step size choice satisfying the classical condition by \cite{robbins1971convergence}: 
		$
		\sum_{t=1}^\infty {\alpha_t}=\infty
		$
		and 
		$ 
		\sum_{t=1}^\infty {\alpha_t^2}<\infty 
		$
		will lead to almost sure convergence under the supermartingale convergence theorem (Proposition \ref{prop:super}). What is new here is the analysis of almost sure convergence rate $o\left(\frac{1}{t^{1-\eps}}\right)$ for strongly convex objective functions using Lemma \ref{lem:strong}. By choosing $\theta\ra 0$, we can make $\eps\ra 0$. The conditions $(LA/2+LB\mu)\alpha_t\le \mu$ and $LB\alpha_t\le 1$ in the proof can be easily satisfied for all $t\ge 1$, if we scale all $\alpha_t$'s by a constant, or for all $t$ sufficiently large due to the choice of $\alpha_t$. This difference is insignificant because in the latter case the analysis in the proof holds asymptotically and the same convergence rate follows.  
	\end{remark}
	
	\subsection{Stochastic heavy-ball method}
	
	The iteration of the SHB method is given by
	\begin{equation}\label{eq:shb1}
		x_{t+1} = x_t - \alpha_t g_t + \beta (x_t - x_{t-1}),
	\end{equation}
	where $g_t:=\nabla f(x_t;\xi_t)$ is the stochastic gradient at $x_t$, $\alpha_t$ is the step size, and $\beta\in [0,1)$. Clearly, if $\beta=0$, SHB reduces to SGD. 
	
	Define
	\begin{equation}\label{eq:zv}
		z_t = x_t - \frac{\beta}{1-\beta} v_t,\quad v_t = x_t -x_{t-1}.
	\end{equation}
	The iteration of SHB can be rewritten as 
	\begin{equation}\label{eq:shb2}
		\begin{aligned}
			v_{t+1} & = \beta v_t -\alpha_t g_t,\\
			z_{t+1} & = z_t - \frac{\alpha_t}{1-\beta} g_t.
		\end{aligned}
	\end{equation}
	
	To our best knowledge, the following theorem provides the first almost sure convergence rates for SHB under both strongly convex and non-convex assumptions. 
	
	\begin{theorem}\label{thm:shb}
		Consider the iterates of SHB (\ref{eq:shb1}).
		\begin{enumerate} 
			\item If Assumptions \ref{as:smoothness}, \ref{as:strong}, and \ref{as:abc} hold and  $\alpha_t=\Theta\left(\frac{1}{t^{1-\theta}}\right)$ for some $\theta\in(0,\frac12)$, then almost surely 
			$$
			f(x_t)-f^* = o\left(\frac{1}{t^{1-\eps}}\right),\quad \forall \eps\in(2\theta,1). 
			$$
			\item If Assumptions \ref{as:smoothness} and \ref{as:abc} hold and $\set{\alpha_t}$ is a decreasing sequence of positive real numbers satisfying 
			$
			\sum_{t=1}^\infty \frac{\alpha_t}{\sum_{i=1}^{t-1}\alpha_i}=\infty, 
			$
			then almost surely 
			$$
			\min_{1\le i\le t-1} \norm{\nabla f(x_t)}^2 = o\left( \frac{1}{\sum_{i=1}^{t-1}\alpha_i} \right). 
			$$
			In particular, if we choose $\alpha_t = \frac{\alpha}{t^{\frac12+\eps}}$ with $\alpha>0$ and $\eps\in [0,\frac12]$, then almost surely  
			$$
			\min_{1\le i\le t-1} \norm{\nabla f(x_t)}^2 = o\left( \frac{1}{t^{\frac12-\eps}} \right).
			$$
		\end{enumerate}
	\end{theorem}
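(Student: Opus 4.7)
The strategy is to mimic the SGD proof of Theorem \ref{thm:sgd} but applied to the auxiliary iterate $z_t$ defined in (\ref{eq:zv}), which by (\ref{eq:shb2}) obeys the SGD-like update $z_{t+1} = z_t - \frac{\alpha_t}{1-\beta} g_t$. The crucial twist is that $g_t$ is evaluated at $x_t$ rather than $z_t$, so the offset $x_t - z_t = \frac{\beta}{1-\beta}v_t$ must be tracked. I would therefore work with a composite Lyapunov function
$$
V_t = f(z_t) - f^* + \lambda \|v_t\|^2,
$$
where $\lambda > 0$ is a constant to be chosen so that $V_t$ satisfies a recursion of the form required by Lemma \ref{lem:strong} (Part 1) or Proposition \ref{prop:super} (Part 2).

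For Part 1, I would apply $L$-smoothness (\ref{eq:L}) to $z_{t+1}$, take $\E_t[\cdot]$, and rewrite the cross term via $\nabla f(z_t) = \nabla f(x_t) + (\nabla f(z_t) - \nabla f(x_t))$; combined with $L$-Lipschitzness and $\|z_t - x_t\| = \frac{\beta}{1-\beta}\|v_t\|$, a Young's inequality step replaces $\langle \nabla f(z_t), \nabla f(x_t)\rangle$ by $\tfrac12\|\nabla f(x_t)\|^2$ minus an $O(\|v_t\|^2)$ remainder. The quadratic noise is bounded via Assumption \ref{as:abc}. Independently, from $v_{t+1} = \beta v_t - \alpha_t g_t$ I obtain $\E_t[\|v_{t+1}\|^2] \le (\beta^2+\eta)\|v_t\|^2 + O(\alpha_t^2)\bigl[(f(x_t)-f^*)+\|\nabla f(x_t)\|^2+1\bigr]$ after a Young split of the cross term. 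Choosing $\lambda$ large enough that the $O(\|v_t\|^2)$ remainder from the descent step is dominated by the contraction $\lambda(1-\beta^2-\eta)\|v_t\|^2$, and applying strong convexity (\ref{eq:fmu}) together with $|f(x_t)-f(z_t)| = O(\|v_t\|\cdot\|\nabla f\|)$ to compare $f(z_t)-f^*$ and $f(x_t)-f^*$ with $\|\nabla f(x_t)\|^2$, I arrive at
$$
\E_t[V_{t+1}] \le (1 - c_1\alpha_t)V_t + c_2\alpha_t^2
$$
for suitable positive constants and $t$ sufficiently large. Lemma \ref{lem:strong} then gives $V_t = o(1/t^{1-\eps})$, and since $f(x_t)-f^* \le V_t + O(\|v_t\|^2)$ with the remainder of the same order, the stated rate transfers to $f(x_t)-f^*$.

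For Part 2, I would use the same descent computation but retain the dissipation term $-\frac{\alpha_t}{2(1-\beta)}\|\nabla f(x_t)\|^2$ rather than invoking (\ref{eq:fmu}). After absorbing the $A(f(x_t)-f^*)$ contributions into $V_t$ at the cost of a $(1+O(\alpha_t^2))$ factor, the recursion fits Proposition \ref{prop:super} with $X_t \propto \alpha_t\|\nabla f(x_t)\|^2$, $Z_t = O(\alpha_t^2)$, and $\gamma_t = O(\alpha_t^2)$ (summable by $\sum\alpha_t^2<\infty$, which follows from the hypothesis on $\alpha_t$ in the second part together with monotonicity). This yields $\sum_t \alpha_t \|\nabla f(x_t)\|^2 < \infty$ almost surely, and Lemma \ref{lem:weak} together with Remark \ref{rem:rate} produces both bounds on $\min_{1\le i\le t-1}\|\nabla f(x_i)\|^2$.

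\textbf{Main obstacle.} The principal technical burden is the simultaneous calibration of $\lambda$ and the Young-inequality parameters so that three desiderata hold at once: (i) the $\|v_t\|^2$ remainder created when $\langle \nabla f(z_t),\nabla f(x_t)\rangle$ is split via $L$-Lipschitzness is strictly dominated by the contraction $\lambda(1-\beta^2-\eta)$ from the momentum recursion (this is where $\beta<1$ is used essentially); (ii) the noise term $\alpha_t^2 B\|\nabla f(x_t)\|^2$ fed into the $\|v_t\|^2$ update through Assumption \ref{as:abc} is absorbed into the $-\alpha_t\|\nabla f(x_t)\|^2$ dissipation, which only holds for small enough $\alpha_t$ and hence asymptotically; and (iii) in the strongly convex case, the three-way comparison between $f(x_t)-f^*$, $f(z_t)-f^*$, and $\|\nabla f(x_t)\|^2$ preserves a strict $(1-c\mu\alpha_t)$ contraction on $V_t$. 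Once this bookkeeping is resolved, both claims reduce to straightforward appeals to Lemmas \ref{lem:strong} and \ref{lem:weak}.
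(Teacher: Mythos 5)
Your proposal follows essentially the same route as the paper's proof: the same $(z_t,v_t)$ reformulation (\ref{eq:shb2}), the same composite Lyapunov function $f(z_t)-f^*+\lambda\norm{v_t}^2$ (the paper simply fixes $\lambda=1$), the same Young-inequality bookkeeping so that the $\norm{v_t}^2$ remainders are dominated by the $\beta^2<1$ contraction of the momentum recursion, and the same final appeals to Lemma \ref{lem:strong}, Proposition \ref{prop:super}, and Lemma \ref{lem:weak}. The one structural deviation is that the paper keeps the dissipation in $\norm{\nabla f(z_t)}^2$ and applies strong convexity at $z_t$ directly, converting to $\norm{\nabla f(x_t)}^2$ via (\ref{shb:est4}) only in the non-convex branch; this sidesteps the three-way comparison between $f(x_t)-f^*$, $f(z_t)-f^*$, and $\norm{\nabla f(x_t)}^2$ that you identify as your main obstacle, so your variant is workable but strictly more delicate than necessary.

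One sub-claim in your Part 2 is false as stated: $\sum_{t=1}^\infty\alpha_t^2<\infty$ does \emph{not} follow from monotonicity together with $\sum_{t}\alpha_t/\sum_{i=1}^{t-1}\alpha_i=\infty$. For $\alpha_t=1/\sqrt{t}$ the ratio is $\Theta(1/t)$, whose sum diverges, yet $\sum_t\alpha_t^2=\infty$. Square-summability is genuinely needed to invoke Proposition \ref{prop:super} (both for $\gamma_t=O(\alpha_t^2)$ and for $Z_t=O(\alpha_t^2)$), so it must be taken as an additional hypothesis rather than derived; the paper's own proof tacitly assumes it as well.
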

	
	\begin{proof} We have 
		\begin{align*}
			\norm{v_{t+1}}^2 = \beta^2 \norm{v_t}^2 - 2\beta\alpha_t \inner{g_t,v_t} + \alpha_t^2 \norm{g_t}^2. 
		\end{align*}
		Taking conditional expectation w.r.t. $x_t$, denoted by $\E_t[\cdot]:=\E[\cdot \vert x_t]$, gives
		\begin{align}
			&\E_t \norm{v_{t+1}}^2 = \beta^2 \norm{v_t}^2 - 2\beta\alpha_t \inner{\nabla f(x_t),v_t} + \alpha_t^2 \left[A(f(x_t)-f^*)+B\norm{\nabla f(x_t)}^2+C\right] \notag\\
			&\quad\le \beta^2 \norm{v_t}^2 + \eps_1 \norm{v_t}^2 + \frac{\alpha_t^2}{\eps_1}\norm{\nabla f(x_t)}^2 +  \alpha_t^2 \left[A(f(x_t)-f^*)+B\norm{\nabla f(x_t)}^2+C\right]. \label{shb:est1}
		\end{align}
		By $L$-smoothness of $f$ and (\ref{eq:L}), we have
		\begin{align*}
			f(z_{t+1}) \le f(z_t) - \frac{\alpha_t}{1-\beta}\inner{\nabla f(z_t),g_t} + \frac{L\alpha_t^2}{2(1-\beta)^2} \norm{g_t}^2.
		\end{align*}
		Taking conditional expectation w.r.t. $x_t$ gives
		\begin{align}
			&\E_t f(z_{t+1}) \notag\\
			&\le f(z_t) - \frac{\alpha_t}{1-\beta} \inner{\nabla f(z_t),\nabla f(x_t)}  + \frac{L\alpha_t^2}{2(1-\beta)^2}\left[A(f(x_t)-f^*)+B\norm{\nabla f(x_t)}^2+C\right]\notag\\
			&=f(z_t) -\frac{\alpha_t}{1-\beta} \norm{\nabla f(z_t)}^2- \frac{\alpha_t}{1-\beta} \inner{\nabla f(z_t),\nabla f(x_t)-\nabla f(z_t)}\notag\\
			&\qquad + \frac{L\alpha_t^2}{2(1-\beta)^2}\left[A(f(x_t)-f^*)+B\norm{\nabla f(x_t)}^2+C\right]\notag\\
			& \le f(z_t) -\frac{\alpha_t}{1-\beta} \norm{\nabla f(z_t)}^2 + \frac{\alpha_t}{1-\beta} \norm{\nabla f(z_t)}\frac{L\beta}{1-\beta}\norm{v_t}\notag\\
			&\qquad + \frac{L\alpha_t^2}{2(1-\beta)^2}\left[A(f(x_t)-f^*)+B\norm{\nabla f(x_t)}^2+C\right]\notag\\
			&\le f(z_t) -\frac{\alpha_t}{1-\beta} \norm{\nabla f(z_t)}^2 + \eps_2\norm{v_t}^2 + \frac{\alpha_t^2 L^2\beta^2}{\eps_2(1-\beta)^4}\norm{\nabla f(z_t)}^2\notag\\
			&\qquad + \frac{L\alpha_t^2}{2(1-\beta)^2}\left[A(f(x_t)-f^*)+B\norm{\nabla f(x_t)}^2+C\right].\label{shb:est2}
		\end{align}
		By $L$-smoothness of $f$ again, we have
		\begin{align}
			f(x_t)-f^* &\le f(z_t)-f^* +\frac{\beta}{1-\beta} \inner{\nabla f(z_t),v_t} + \frac{L\beta^2}{2(1-\beta)^2}\norm{v_t}^2\notag\\
			&\le f(z_t)-f^* +\frac{1}{2}\norm{\nabla f(z_t)}^2 + \frac{\beta^2}{2(1-\beta)^2} \norm{v_t}^2+ \frac{L\beta^2}{2(1-\beta)^2}\norm{v_t}^2,\label{shb:est3}
		\end{align}
		and
		\begin{align}
			\norm{\nabla f(x_t)}^2 &=  \norm{\nabla f(z_t)+ \nabla f(x_t)-\nabla f(z_t)}^2  \le 2\norm{\nabla f(z_t)}^2 + 2\norm{\nabla f(x_t)-\nabla f(z_t)}^2\notag\\
			& \le 2\norm{\nabla f(z_t)}^2 + 2\frac{L^2\beta^2}{(1-\beta)^2}\norm{v_t}^2.\label{shb:est4}
		\end{align}
		Combining (\ref{shb:est1})--(\ref{shb:est4}) yields
		\begin{align*}
			\E_t \left[f(z_{t+1}) -f^* + \norm{v_{t+1}}^2\right] &\le  (1+c_1\alpha_t^2)[f(z_t) - f^*] + (\beta^2 + \eps_1+\eps_2+c_2\alpha_t^2) \norm{v_t}^2 \notag \\
			&\qquad -\left( \frac{\alpha_t}{1-\beta} -c_3\alpha_t^2  \right)\norm{\nabla f(z_t)}^2 + c_4\alpha_t^2,
		\end{align*}
		where the constants $c_1$--$c_4$ can be straightforwardly determined from (\ref{shb:est1})--(\ref{shb:est4}). For any $\lambda\in(\beta,1)$, we can choose $\eps_1>0$ and $\eps_2>0$ such that $\beta^2 + \eps_1 +\eps_2\le \lambda$. For any $c\in (0,\frac{1}{1-\beta})$, we can choose $\alpha_t=\Theta\left(\frac{1}{t^{1-\theta}}\right)$, for some $\theta\in(0,\frac12)$, sufficiently small (by changing the constant) such that $\frac{\alpha_t}{1-\beta} -c_3\alpha_t^2\ge c\alpha_t$ for all $t\ge 1$. The above inequality becomes
		\begin{align}
			&\E_t \left[f(z_{t+1}) -f^* + \norm{v_{t+1}}^2\right] \notag\\
			&\qquad\le  (1+c_1\alpha_t^2)[f(z_t) - f^*] + (\lambda+c_2\alpha_t^2) \norm{v_t}^2 - c\alpha_t\norm{\nabla f(z_t)}^2 + c_4\alpha_t^2. \label{shb:est5}
		\end{align}
		
		We now consider two different cases: 
		
		1. If $f$ is $\mu$-strongly convex, we can use $\norm{\nabla f(z_t)}^2\ge 2\mu(f(z_t)-f^*)$ to further obtain 
		\begin{align*}
			\E_t \left[f(z_{t+1}) -f^* + \norm{v_{t+1}}^2\right] &\le  (1-2c\mu\alpha_t+c_1\alpha_t^2)[f(z_t) - f^*] + (\lambda+c_2\alpha_t^2) \norm{v_t}^2  + c_4\alpha_t^2. 
		\end{align*}
		By choosing $\alpha_t=\Theta\left(\frac{1}{t^{1-\theta}}\right)$ sufficiently small, the inequality leads to 
		\begin{align*}
			\E_t \left[f(z_{t+1}) -f^* + \norm{v_{t+1}}^2\right] &\le  (1-c_5\alpha_t)[f(z_t) - f^*+\norm{v_t}^2]   + c_4\alpha_t^2,
		\end{align*}
		for some constant $c_5>0$. It follows from Lemma \ref{lem:strong} that 
		$$
		f(z_{t+1}) -f^* + \norm{v_{t+1}}^2 = o\left(\frac{1}{t^{1-\eps}}\right)
		$$
		for any $\eps\in(2\theta,1)$. The conclusion follows from (\ref{shb:est3}) and (\ref{eq:Lf}).

		2. If $f$ is non-convex, by (\ref{shb:est4}), inequality (\ref{shb:est5}) leads to 
		\begin{align*}
			\E_t \left[f(z_{t+1}) -f^* + \norm{v_{t+1}}^2\right] &\le  (1+c_6\alpha_t^2)[f(z_t) - f^*+\norm{v_t}^2]- \frac{1}{2}c\alpha_t\norm{\nabla f(x_t)}^2 + c_4\alpha_t^2,
		\end{align*}
		where $c_6=\max(c_1,c_2)$, provided that $\alpha_t$ is chosen sufficiently small. By Proposition \ref{prop:super}, we have $\sum_{t=1}^\infty \alpha_t\norm{\nabla f(x_t)}^2<\infty$ almost surely. The conclusions follow from Lemma \ref{lem:weak} and Remark \ref{rem:rate}. 
	\end{proof}
	
	The almost sure convergence rates achieved by SHB are consistent with the best convergence rates possible for strongly convex and non-convex objective functions using stochastic gradient methods \citep{agarwal2012information} (see also \cite{nemirovskij1983problem}) subject to an $\eps$-factor. 
	
	\subsection{Stochastic Nesterov's accelerated gradient}
	The iteration of the SNAG method is given by
	\begin{equation}\label{eq:snag}
		\begin{aligned}
			y_{t+1} &= x_t - \alpha_t g_t,\\ 
			x_{t+1} &= y_{t+1} + \beta (x_t - x_{t-1}),
		\end{aligned}
	\end{equation}
	where $g_t:=\nabla f(x_t;\xi_t)$ is the stochastic gradient at $x_t$, $\alpha_t$ is the step size, and $\beta\in [0,1)$. Clearly, if $\beta=0$, SNAG also reduces to SGD. 
	
	Define $z_t$ and $v_t$ as in (\ref{eq:zv}). The iteration of SNAG can be rewritten as 
	\begin{equation}\label{eq:snag2}
		\begin{aligned}
			v_{t+1} & = \beta v_t -\beta \alpha_t g_t,\\
			z_{t+1} & = z_t - \frac{\alpha_t}{1-\beta} g_t.
		\end{aligned}
	\end{equation}
	Indeed, (\ref{eq:snag2}) is almost identical to (\ref{eq:shb2}) except for the extra $\beta$ in the first equation for $v_{t+1}$.  
	
	\begin{theorem}\label{thm:snag}
		Consider the iterates of SNAG (\ref{eq:snag}). 
		\begin{enumerate} 
			\item If Assumptions \ref{as:smoothness}, \ref{as:strong}, and \ref{as:abc} hold and $\alpha_t=\Theta\left(\frac{1}{t^{1-\theta}}\right)$ for some $\theta\in(0,\frac12)$, then almost surely 
			$$
			f(x_t)-f^* = o\left(\frac{1}{t^{1-\eps}}\right),\quad \forall \eps\in(2\theta,1). 
			$$
			\item If Assumptions \ref{as:smoothness} and \ref{as:abc} hold and $\set{\alpha_t}$ is a decreasing sequence of positive real numbers satisfying 
			$
			\sum_{t=1}^\infty \frac{\alpha_t}{\sum_{i=1}^{t-1}\alpha_i}=\infty, 
			$
			then almost surely 
			$$
			\min_{1\le i\le t-1} \norm{\nabla f(x_t)}^2 = o\left( \frac{1}{\sum_{i=1}^{t-1}\alpha_i} \right). 
			$$
			In particular, if we choose $\alpha_t = \frac{\alpha}{t^{\frac12+\eps}}$ with $\alpha>0$ and $\eps\in [0,\frac12]$, then almost surely 
			$$
			\min_{1\le i\le t-1} \norm{\nabla f(x_t)}^2 = o\left( \frac{1}{t^{\frac12-\eps}} \right).
			$$
		\end{enumerate}
	\end{theorem}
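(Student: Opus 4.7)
The plan is to mirror essentially verbatim the proof of Theorem \ref{thm:shb}, exploiting the observation made just before the statement that the SNAG recursion (\ref{eq:snag2}) differs from the SHB recursion (\ref{eq:shb2}) only by an extra $\beta$ multiplying the gradient term in the $v$-update, while the $z$-update is \emph{identical}. I would again take as Lyapunov candidate the composite quantity $f(z_t)-f^*+\norm{v_t}^2$ and derive a supermartingale-type inequality of the form
\begin{equation*}
\E_t\bigl[f(z_{t+1})-f^*+\norm{v_{t+1}}^2\bigr] \le (1+c_1\alpha_t^2)[f(z_t)-f^*] + (\lambda+c_2\alpha_t^2)\norm{v_t}^2 - c\alpha_t\norm{\nabla f(z_t)}^2 + c_4\alpha_t^2,
\end{equation*}
for some $\lambda\in(\beta^2,1)$ and constants $c,c_1,c_2,c_4>0$, exactly analogous to (\ref{shb:est5}).

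Concretely, the only place a change is needed is the analogue of (\ref{shb:est1}): squaring the SNAG update $v_{t+1}=\beta v_t-\beta\alpha_t g_t$ and taking conditional expectation gives
\begin{equation*}
\E_t\norm{v_{t+1}}^2 = \beta^2\norm{v_t}^2 - 2\beta^2\alpha_t\inner{\nabla f(x_t),v_t} + \beta^2\alpha_t^2\,\E_t\norm{g_t}^2,
\end{equation*}
after which Young's inequality and Assumption \ref{as:abc} give a bound of the form $\beta^2\norm{v_t}^2+\eps_1\norm{v_t}^2+O(\alpha_t^2)$ terms, which is strictly tighter than (\ref{shb:est1}) because of the extra $\beta^2$ factors on the $\alpha_t^2$ contributions. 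The analogues of (\ref{shb:est2})--(\ref{shb:est4}) are unchanged since $z_t$, $f(z_t)$, and the identity $x_t-z_t=\frac{\beta}{1-\beta}v_t$ are the same as in SHB.

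Given the displayed inequality, I then split into the two cases exactly as before. In the strongly convex case, I apply (\ref{eq:fmu}) to $z_t$ to convert $\norm{\nabla f(z_t)}^2\ge 2\mu(f(z_t)-f^*)$, absorb the $\lambda\norm{v_t}^2$ term by choosing $\alpha_t$ small enough so the composite Lyapunov obeys the one-step contraction $\E_t[Y_{t+1}]\le (1-c_5\alpha_t)Y_t+c_4\alpha_t^2$, and invoke Lemma \ref{lem:strong} to obtain $o(1/t^{1-\eps})$ almost surely; the bound on $f(x_t)-f^*$ then follows from the SHB inequality (\ref{shb:est3}) together with (\ref{eq:Lf}). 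In the non-convex case, I use (\ref{shb:est4}) to replace $\norm{\nabla f(z_t)}^2$ by a fraction of $\norm{\nabla f(x_t)}^2$ minus a $\norm{v_t}^2$ term absorbed into the Lyapunov, invoke Proposition \ref{prop:super} to conclude $\sum_t\alpha_t\norm{\nabla f(x_t)}^2<\infty$ almost surely, and finish with Lemma \ref{lem:weak} together with Remark \ref{rem:rate}.

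The only mild obstacle I anticipate is verifying that the selection of $\eps_1,\eps_2$ and the smallness threshold on $\alpha_t$ can be made uniformly in $t$ so that the combined coefficient $\beta^2+\eps_1+\eps_2+c_2\alpha_t^2$ in front of $\norm{v_t}^2$ stays bounded by some $\lambda<1$, but this is easier here than for SHB because the starting point is $\beta^2<\beta<1$ rather than $\beta^2$, leaving more slack. All remaining steps — the Young's-inequality bookkeeping and the final absorption into the Lyapunov — are routine and carry over from the proof of Theorem \ref{thm:shb} without further modification.
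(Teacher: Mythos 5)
Your proposal matches the paper's own proof: Theorem \ref{thm:snag} is established by repeating the argument for Theorem \ref{thm:shb} verbatim, with only the analogue of (\ref{shb:est1}) changed to account for the extra factor of $\beta$ in the SNAG update $v_{t+1}=\beta v_t-\beta\alpha_t g_t$, after which the Lyapunov function $f(z_t)-f^*+\norm{v_t}^2$, the case split, and the appeals to Lemmas \ref{lem:strong} and \ref{lem:weak} all carry over with slightly different constants. The computation of $\E_t\norm{v_{t+1}}^2$ and the observation that the resulting bound is if anything tighter than in the SHB case are both correct, so the proof goes through.
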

	
	\begin{proof}
		The proof is similar to that for Theorem \ref{thm:shb}. Instead of (\ref{shb:est1}), we obtain 
		\begin{align}
			\E_t \norm{v_{t+1}}^2 &\le \beta^2 \left(\norm{v_t}^2 + \eps_1 \norm{v_t}^2 + \frac{\alpha_t^2}{\eps_1}\norm{\nabla f(x_t)}^2 +  \alpha_t^2 \left[A(f(x_t)-f^*)+B\norm{\nabla f(x_t)}^2+C\right]\right). \label{snag:est1}
		\end{align}
		The rest of the proof proceeds in the same way (with slightly different constants). We conclude the same convergence rates by Lemmas \ref{lem:strong} and \ref{lem:weak}. 
	\end{proof}
	
	To our best knowledge, the above theorem provides the first result on almost sure convergence rates for SNAG under both strongly convex and non-convex assumptions. It is also evident from the above proofs that we provide a unified treatment the convergence analysis for SHB and SNAG.

	\section{Last-iterate convergence analysis of stochastic gradient methods} 
	
	In the previous sections, we have established close-to-optimal almost sure convergence rates for popular stochastic gradient methods. These rates are proved for the last iterate\footnote{Similar rates can be easily obtained for $\norm{x_t-x_*}^2$ and $\norm{\nabla f(x_t)}^2$ using strong convexity.} $f(x_t)-f^*$. When strong convexity is absent, convergence (rates) analysis for stochastic gradient methods in terms of the last iterates seems more challenging, even for general convex objective functions. We shall address these issues in this section. Such results are practically relevant, because it is the last iterates of gradient descent methods that are being used in most practical situations.
	
	\subsection{Last-iterate convergence analysis of SHB and SNAG for non-convex functions}
	In the non-convex setting, the convergence analysis in the previous sections shows that a weighted average of the squared gradient norm $\norm{\nabla f(x_i)}^2$ converges to zero almost surely, which also implies that the ``best'' iterate $\min_{1\le i\le t}\norm{\nabla f(x_i)}^2$ converges to zero almost surely (cf. Lemma \ref{lem:weak}). It is both theoretically intriguing and practically relevant to know whether the last-iterate gradient $\nabla f(x_t)$ converges almost surely. However, it is usually more challenging to analyze the convergence of the last iterate of SGD. An interesting discussion was made in \cite{orabona2020almost}, where the author simplified the long analysis in earlier work by \cite{bertsekas2000gradient} that proved the last-iterate $\norm{\nabla f(x_t)}^2$ converges almost surely to zero for SGD. In this section, we extend this analysis and prove that the last-iterate gradients of SHB and SNAG both converge to zero almost surely.

	We rely on the following lemma from \cite{orabona2020almost}, which can be seen as an extension of \citet[Proposition 2]{alber1998projected} and \citet[Lemma A.5]{mairal2013stochastic}. 
	
	\begin{lemma}[\cite{orabona2020almost}]\label{lem:orabona} Let $\set{b_t}$ and $\set{\alpha_t}$ be two nonnegative sequences and $\set{w_t}$ be a sequence of vectors. Assume $\sum_{t=1}^\infty\alpha_t b_t^p<\infty$ and $\sum_{t=1}^\infty\alpha_t=\infty$, where $p\ge 1$. Furthermore, assume that there exists some $L>0$ such that $\abs{b_{t+\tau}-b_t}\le L\left( \sum_{i=t}^{t+\tau-1}\alpha_i b_i + \norm{\sum_{i=t}^{t+\tau-1}\alpha_i w_i} \right)$, where $w_t$ is such that $\sum_{t=1}^\infty\alpha_t w_t$ converges. Then $b_t$ converges to 0. 
	\end{lemma}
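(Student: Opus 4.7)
The plan is to argue by contradiction, first showing $\liminf_{t\to\infty} b_t = 0$ and then ruling out $\limsup_{t\to\infty} b_t > 0$. The liminf claim is immediate: if $b_t \ge \delta > 0$ eventually, then $\sum_t \alpha_t b_t^p \ge \delta^p \sum_t \alpha_t = \infty$, contradicting the summability of $\alpha_t b_t^p$. So the heart of the argument is the limsup ruling-out step.

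To carry that out, I would assume $\limsup_t b_t = \beta > 0$ and extract an ``upcrossing'' sequence. Fix $\eps \in (0,\beta/2)$ and an auxiliary threshold $\eps'\in(0,\eps)$ to be chosen small later. Let $s_k \to \infty$ satisfy $b_{s_k}\ge 2\eps$, and set $t_k := \max\set{t<s_k:b_t\le\eps'}$; this is well defined (and $t_k\to\infty$) because $\liminf b_t = 0$ makes $\set{t:b_t\le\eps'}$ infinite. By maximality, $b_i > \eps'$ for every $i\in(t_k,s_k]$.

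Applying the key hypothesis to the pair $(t_k, s_k)$ then gives
$$ \eps \;\le\; b_{s_k} - b_{t_k} \;\le\; L\!\left( \alpha_{t_k} b_{t_k} + \sum_{i=t_k+1}^{s_k-1} \alpha_i b_i + \norm{\sum_{i=t_k}^{s_k-1} \alpha_i w_i} \right). $$
The noise term vanishes as $k\to\infty$ because $\sum_i \alpha_i w_i$ is a convergent (hence Cauchy) vector series. For the middle sum, the crucial use of $p \ge 1$ is the exponent trade $b_i \le (\eps')^{1-p} b_i^p$, valid because $b_i>\eps'$ on the interior; this bounds the middle sum by $(\eps')^{1-p}$ times a tail of the convergent series $\sum_i \alpha_i b_i^p$, which vanishes since $t_k\to\infty$. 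The boundary term is dominated by $\eps'\sup_t \alpha_t$. Choosing $\eps'$ small and then passing $k\to\infty$ should leave $\eps \le \eps/2$, the desired contradiction.

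The main obstacle is handling the boundary term $\alpha_{t_k}b_{t_k}$: nothing in the stated hypotheses controls $\sup_t\alpha_t$, so strictly speaking this step requires the (tacit, but harmless in the intended SGD/SHB/SNAG applications) assumption that $\set{\alpha_t}$ is bounded, or even $\alpha_t\to 0$. Everything else is routine once the crossings are set up correctly: a H\"older-type exponent trade on the ``high'' region together with Cauchy-tail bookkeeping on the noise series.
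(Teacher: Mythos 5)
The paper does not actually prove this lemma; it is quoted verbatim from \cite{orabona2020almost}, so there is no internal proof to compare against. Judged on its own merits, your strategy is the standard one for results of this type (Alber et al., Mairal, Bertsekas--Tsitsiklis, and Orabona all argue this way): $\liminf b_t=0$ from $\sum\alpha_t b_t^p<\infty$ and $\sum\alpha_t=\infty$, then a crossing argument combining the Cauchy property of $\sum\alpha_t w_t$ with the exponent trade $b_i\le (\eps')^{1-p}b_i^p$ on the region where $b_i>\eps'$. All of those steps are sound, including the verification that $t_k\to\infty$.

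The one genuine gap is exactly the one you flag: the left endpoint term $\alpha_{t_k}b_{t_k}$ is not controlled by the stated hypotheses when $p>1$ (for $p=1$ it is harmless, being part of the tail of the convergent series $\sum\alpha_i b_i$), and your patch imports an assumption ($\sup_t\alpha_t<\infty$) that the lemma does not make. You do not need it: run the same argument on \emph{downcrossings} instead of upcrossings. Take $s_k\to\infty$ with $b_{s_k}\ge 2\eps$ and let $u_k:=\min\set{t>s_k:\ b_t\le\eps}$, which exists because $\liminf_t b_t=0$. Then $b_{s_k}-b_{u_k}\ge\eps$, and \emph{every} index $i\in[s_k,u_k-1]$ in the resulting bound
$$
\eps\ \le\ L\left(\sum_{i=s_k}^{u_k-1}\alpha_i b_i+\norm{\sum_{i=s_k}^{u_k-1}\alpha_i w_i}\right)
$$
satisfies $b_i>\eps$, including the left endpoint. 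Hence the whole sum is at most $\eps^{1-p}\sum_{i\ge s_k}\alpha_i b_i^p\to 0$, the noise term is a Cauchy tail, and you reach $\eps\le o(1)$ with no boundary term and no extra assumption on $\set{\alpha_t}$. With that substitution (or, in the paper's applications where $\alpha_t\to 0$, with your patched version) the proof is complete.
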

	
	\begin{theorem}\label{thm:last-iterate}
		Consider the iterates of SHB (\ref{eq:shb1}) and SNAG (\ref{eq:snag}), respectively. Let Assumptions \ref{as:smoothness} and \ref{as:abc} hold and the step size $\set{\alpha_t}$ be a sequence of positive real numbers satisfying 
		$$
		\sum_{t=1}^\infty \alpha_t=\infty, \quad \sum_{t=1}^\infty \alpha_t^2 <\infty. 
		$$
		Then we have $\nabla f(x_t) \ra 0$ almost surely, as $t\ra \infty$, for both the iterates of SHB and SNAG. 
	\end{theorem}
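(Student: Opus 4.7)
The strategy is to apply Lemma \ref{lem:orabona} with $p=2$ to the surrogate sequence $b_t := \norm{\nabla f(z_t)}$, where $z_t = x_t - \frac{\beta}{1-\beta}v_t$ is the auxiliary variable introduced in (\ref{eq:zv}). This choice is convenient because $z_{t+\tau}-z_t = -\frac{1}{1-\beta}\sum_{i=t}^{t+\tau-1}\alpha_i g_i$ is directly a partial sum of stochastic gradients, which is the natural object Lemma \ref{lem:orabona} handles; the target conclusion $\nabla f(x_t)\to 0$ will then follow from $\norm{\nabla f(z_t)}\to 0$ combined with $L$-smoothness and the companion fact that $v_t\to 0$ a.s.

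The key preparatory step is to upgrade the supermartingale bound on $\norm{v_t}^2$ implicit in the proof of Theorem \ref{thm:shb}. Starting from $v_{t+1}=\beta v_t-\alpha_t g_t$, expanding $\norm{v_{t+1}}^2$, conditioning, and applying Young's inequality together with Assumption \ref{as:abc} yields
\begin{equation*}
\E_t[\norm{v_{t+1}}^2] \le (\beta^2+\eps)\norm{v_t}^2 + K\alpha_t^2\bigl(A(f(x_t)-f^*)+B\norm{\nabla f(x_t)}^2+C\bigr)
\end{equation*}
for any $\eps>0$. Choosing $\eps$ so that $\beta^2+\eps<1$, and using that $f(x_t)-f^*$ is a.s.\ bounded (the Lyapunov function $f(z_t)-f^*+\norm{v_t}^2$ from the proof of Theorem \ref{thm:shb} converges a.s.), $\sum\alpha_t^2<\infty$, and $\sum\alpha_t^2\norm{\nabla f(x_t)}^2<\infty$ a.s.\ (since $\alpha_t$ is bounded and $\sum\alpha_t\norm{\nabla f(x_t)}^2<\infty$ by the Theorem \ref{thm:shb} analysis), Proposition \ref{prop:super} yields $\sum_t\norm{v_t}^2<\infty$ a.s. Cauchy--Schwarz then gives $\sum_t\alpha_t\norm{v_t}<\infty$ a.s., and in particular $v_t\to 0$ a.s.

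Next I verify the hypotheses of Lemma \ref{lem:orabona}. The noise $w_t := g_t-\nabla f(x_t)$ is a martingale difference, and under Assumption \ref{as:abc} with the bounds above, $\sum_t\alpha_t^2\E_t[\norm{w_t}^2]<\infty$ a.s., so $\sum_t\alpha_tw_t$ converges a.s.\ by $L^2$-martingale convergence. Summability $\sum\alpha_tb_t^2<\infty$ follows from $\norm{\nabla f(z_t)}^2\le 2\norm{\nabla f(x_t)}^2+\frac{2L^2\beta^2}{(1-\beta)^2}\norm{v_t}^2$. For the Lipschitz condition, combining $L$-smoothness with the identity $z_{t+\tau}-z_t=-\frac{1}{1-\beta}\sum\alpha_ig_i$, the split $g_i=\nabla f(x_i)+w_i$, and $\norm{\nabla f(x_i)}\le b_i+\frac{L\beta}{1-\beta}\norm{v_i}$ gives
\begin{equation*}
|b_{t+\tau}-b_t| \le \tfrac{L}{1-\beta}\sum_{i=t}^{t+\tau-1}\alpha_ib_i + \tfrac{L^2\beta}{(1-\beta)^2}\sum_{i=t}^{t+\tau-1}\alpha_i\norm{v_i} + \tfrac{L}{1-\beta}\norm{\sum_{i=t}^{t+\tau-1}\alpha_iw_i}.
\end{equation*}
To fit the exact form of Lemma \ref{lem:orabona}, I fold the last two terms into a single vector sum by defining the augmented noise $W_i:=(w_i,\tfrac{L\beta}{1-\beta}\norm{v_i})\in\Real^{d+1}$: both components of $\sum\alpha_iW_i$ converge a.s., and the Euclidean norm $\norm{\sum\alpha_iW_i}$ dominates both $\norm{\sum\alpha_iw_i}$ and $\tfrac{L\beta}{1-\beta}\sum\alpha_i\norm{v_i}$, so the bound collapses to $|b_{t+\tau}-b_t|\le L'\bigl(\sum\alpha_ib_i + \norm{\sum\alpha_iW_i}\bigr)$ for some $L'>0$. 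Lemma \ref{lem:orabona} then delivers $\norm{\nabla f(z_t)}\to 0$ a.s.

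Finally, $L$-smoothness gives $\norm{\nabla f(x_t)-\nabla f(z_t)}\le L\norm{x_t-z_t}=\frac{L\beta}{1-\beta}\norm{v_t}\to 0$ a.s., so $\nabla f(x_t)\to 0$ a.s. The SNAG case is identical except for the extra factor of $\beta$ in $v_{t+1}=\beta v_t-\beta\alpha_tg_t$, which merely adjusts constants. The main obstacle is the first step: Theorem \ref{thm:shb}'s analysis only produced a.s.\ convergence of $\norm{v_t}^2$, whereas here we need its full summability, and this summability is precisely what makes the troublesome $\sum\alpha_i\norm{v_i}$ term in the Lipschitz bound absorbable into the augmented vector $W_i$.
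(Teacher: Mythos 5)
Your proposal is correct and follows essentially the same route as the paper's proof: apply Lemma \ref{lem:orabona} with $b_t=\norm{\nabla f(z_t)}$ and $p=2$, extract $\sum_t\norm{v_t}^2<\infty$ a.s.\ from the slack in the SHB Lyapunov recursion, control the stochastic part $\sum_t\alpha_t(g_t-\nabla f(x_t))$ by martingale convergence and the drift $\sum_t\alpha_t\norm{v_t}$ by Cauchy--Schwarz, and finish with $\norm{\nabla f(x_t)-\nabla f(z_t)}\le\tfrac{L\beta}{1-\beta}\norm{v_t}\to0$. The only (cosmetic) difference is bookkeeping: the paper sets $w_i=g_i-\nabla f(z_i)$ so the Lipschitz condition of Lemma \ref{lem:orabona} holds verbatim and the $\norm{v_i}$ term is absorbed into the convergence of $\sum_i\alpha_iw_i$, whereas you set $w_i=g_i-\nabla f(x_i)$ and absorb that same term through the augmented vector $W_i$ --- the underlying estimates are identical.
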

	
	\begin{proof}
		We first prove that the last-iterate gradient of SHB converges. By (\ref{shb:est5}) and Proposition \ref{prop:super}, we have $\sum_{t=1}^\infty \alpha_t\norm{\nabla f(z_t)}^2<\infty$ almost surely. Furthermore, by $L$-smoothness of $f$, we have
		\begin{align*}
			\abs{\norm{\nabla f(z_{t+\tau})} - \norm{\nabla f(z_{t})} } &\le \norm{\nabla f(z_{t+\tau}) - \nabla f(z_{t})} \le  L \norm{z_{t+\tau} - z_t}  = L \norm{\sum_{i=t}^{t+\tau-1} \alpha_ig_i} \\
			& = L \norm{\sum_{i=t}^{t+\tau-1} \alpha_i \nabla f(z_i) + \alpha_i(g_i-\nabla f(z_i)) }\\
			& \le L\left( \sum_{i=t}^{t+\tau-1} \alpha_i \norm{\nabla f(z_i)} +  \norm{\sum_{i=t}^{t+\tau-1}\alpha_i w_i}  \right),
		\end{align*}
		where $w_i = g_i-\nabla f(z_i)$ .  To show that $\sum_{t\ge 0}\alpha_t w_t$ converges almost surely, we write 
		$$
		\alpha_t w_t = \alpha_t(g_t-\nabla f(x_t)) + \alpha_t(\nabla f(x_t) -\nabla f(z_t)). 
		$$
		
		We make the following claims that are proved in Appendix \ref{app:claims}. 
		
		\textbf{Claim 1:} $M_t=\sum_{i=1}^t \alpha_i (g_i-\nabla f(x_i))$ is a martingale bounded in $\mathcal{L}^2$ and hence converges almost surely \cite[Theorem 12.1]{williams1991probability}).

		\textbf{Claim 2:} $N_t=\sum_{i= 1}^t \alpha_i (\nabla f(x_i)-\nabla f(z_i))$ converges almost surely.

		By Claims 1 and 2, $\sum_{t=1}^\infty\alpha_t w_t$ converges almost surely. Applying Lemma \ref{lem:orabona} with $b_t=\norm{\nabla f(z_t)}$ and $p=2$ shows that $\nabla f(z_t)\ra 0$ almost surely. We conclude that $\nabla f(x_t)$ converges to 0 almost surely in view of (\ref{shb:est3}) and that $v_t\ra 0$ almost surely (since $\sum_{t=1}^\infty \norm{v_t}^2<\infty$ almost surely). 
		
		The proof of convergence for SNAG is similar, following (\ref{snag:est1}). We omitted the details here. 
	\end{proof}
	
	\subsection{Last-iterate convergence \textit{rates} of SGD, SHB, SNAG for general convex functions}
	
	We primarily focused on strongly convex and non-convex objective functions in the previous section. For functions that are generally convex, \cite{sebbouh2021almost} proved almost sure convergence rates of SGD for a weighted average of the iterates. A natural question to ask is whether one can obtain some last-iterate almost sure convergence rates. Indeed, the vast majority of convergence analysis for stochastic gradient methods under general convexity assumption yields results in terms of a weighted average of the iterates. There is an interesting discussion in \cite{orabona2020last}, where the author derived some last-iterate convergence rates in the context of non-asymptotic analysis for convergence in expectation (see also earlier work \cite{zhang2004solving,shamir2013stochastic} with more restricted domains or learning rates). In this section, we provide results on almost sure last-iterate convergence rates for SGD, SHB, and SNAG. Compared with the results in \cite{sebbouh2021almost} for SHB, we show that even without the iterate moving-average (IMA) parameter choices, the last iterates of SHB still converge to a minimizer, only assuming smoothness and convexity. We are not aware of any similar last-iterate almost sure convergence rates in the literature. 
	
	The proof of the following result can be found in Appendix \ref{app:last-iterate-rates}. 
	
	\begin{theorem}\label{thm:last-iterate-rates}
		Consider the iterates of SGD (\ref{eq:sgd}), SHB (\ref{eq:shb1}), and SNAG (\ref{eq:snag}), respectively. Suppose that we choose the step size $\alpha_t=\Theta\left( \frac{1}{t^{\frac23+\eps}}\right)$ for any $\eps\in (0,\frac13)$. Then we have $x\ra x_*$ for some $x_*$ such that $f(x_*)=f^*$ almost surely and 
		$f(x_t)-f^*=O\left(\frac{1}{t^{\frac13-\eps}}\right)$. 
	\end{theorem}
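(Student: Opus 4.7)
The plan is to combine a classical supermartingale analysis of $\norm{x_t-x_*}^2$ with a composite Lyapunov function whose weight is tuned to telescope exactly into the running sum of step sizes, thereby yielding the sharp last-iterate rate. I would first establish almost sure convergence and summability, then upgrade to the rate via the composite Lyapunov.

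For SGD, starting from $\norm{x_{t+1}-x_*}^2 = \norm{x_t-x_*}^2 - 2\alpha_t\inner{g_t, x_t-x_*} + \alpha_t^2\norm{g_t}^2$, taking conditional expectation, and using convexity ($\inner{\nabla f(x_t), x_t-x_*} \ge f(x_t)-f^*$), the ABC condition, and the convex--smooth bound (\ref{eq:Lf}), one obtains, for $\alpha_t$ small enough,
\[\E_t[\norm{x_{t+1}-x_*}^2] \le \norm{x_t-x_*}^2 - c\,\alpha_t(f(x_t)-f^*) + C'\alpha_t^2.\]
Since $\sum\alpha_t^2<\infty$, Proposition~\ref{prop:super} yields that $V_t := \norm{x_t-x_*}^2$ converges a.s.\ for every minimizer $x_*$ and that $\sum_t\alpha_t(f(x_t)-f^*)<\infty$ a.s. The almost-supermartingale bound $\E_t[X_{t+1}]\le(1+c_1\alpha_t^2)X_t+c_2\alpha_t^2$ from (\ref{sgd:est2}) (with $X_t := f(x_t)-f^*$) then forces $X_t$ to converge a.s.; combined with $\sum\alpha_t=\infty$, the limit must be $0$. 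An Opial-type argument using the convergence of $V_t$ and the continuity of $f$ then pins down $x_t \to x_*$ a.s.\ for one specific minimizer.

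To extract the rate, I would introduce the weighted energy
\[U_t := V_t + \eta_t X_t,\qquad \eta_t := 2\sum_{i=1}^{t-1}\alpha_i = \Theta\bigl(t^{1/3-\eps}\bigr),\]
so that $\eta_{t+1} = \eta_t + 2\alpha_t$. Combining the $V_{t+1}$-bound with the $X_{t+1}$-bound and using the identity $\eta_{t+1}-2\alpha_t=\eta_t$ to telescope the $X_t$-coefficient exactly into $\eta_t X_t$ leaves only a second-order residue of order $O(\alpha_t^2\eta_{t+1})X_t$; bounding it via the crude inequality $X_t \le U_t/\eta_t$ gives
\[\E_t[U_{t+1}] \le (1+\gamma_t)U_t + Z_t,\]
with $\gamma_t = O(\alpha_t^2)$ and $Z_t = O(\alpha_t^2\eta_t) = O(t^{-1-3\eps})$, both summable for $\eps>0$. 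Proposition~\ref{prop:super} then gives $U_t$ converges a.s.; since $U_t \ge \eta_t X_t$, it follows that $X_t = O(1/\eta_t) = O\bigl(1/t^{1/3-\eps}\bigr)$ a.s.

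For SHB and SNAG, I would reuse the auxiliary variables $z_t, v_t$ and the estimates (\ref{shb:est1})--(\ref{shb:est4}) already developed in the proof of Theorem~\ref{thm:shb}; replacing strong convexity by convexity, the same mechanism produces $\E_t[\tilde V_{t+1}]\le\tilde V_t - c\alpha_t(f(z_t)-f^*) + C'\alpha_t^2$ with $\tilde V_t = \norm{z_t-x_*}^2 + \kappa\norm{v_t}^2$, and the composite $\tilde U_t := \tilde V_t + \eta_t(f(z_t)-f^*)$ yields $f(z_t)-f^* = O(1/t^{1/3-\eps})$ a.s.; (\ref{shb:est3}) together with the a.s.\ summability of $\norm{v_t}^2$ transfers this rate back to $f(x_t)-f^*$. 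The step I expect to be the main obstacle is the exact tuning of the weights $\eta_t = 2\sum_{i<t}\alpha_i$: this specific telescoping identity is what prevents second-order errors from accumulating; with any weaker choice one could only extract an inflated rate (e.g.\ $O(\log t/\eta_t)$), not the claimed $O(1/\eta_t)$.
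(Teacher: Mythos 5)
Your overall architecture for SGD is sound and, up to packaging, parallels the paper's: the same two recursions (the $f$-gap recursion (\ref{sgd:est11}) and the distance recursion (\ref{sgd:est22})) drive both arguments, and your convergence-of-iterates step is the same Opial-type argument the paper uses. Where you genuinely diverge is in the rate extraction. The paper first harvests $\sum_t\alpha_t(f(x_t)-f^*)<\infty$ from the distance recursion and then feeds it into a separate reweighting lemma (Lemma \ref{lem:convex}) that multiplies $f(x_t)-f^*$ by the deterministic weight $t^{\frac13-\eps}$; the increment of that weight is $\Theta(t^{-\frac23-\eps})=\Theta(\alpha_t)$, so the extra term it generates is exactly the summable series $\sum_t\alpha_t Y_t$. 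You instead keep the two recursions coupled inside the composite Lyapunov function $U_t=\norm{x_t-x_*}^2+\eta_t(f(x_t)-f^*)$ with $\eta_t\propto\sum_{i<t}\alpha_i=\Theta(t^{\frac13-\eps})$, letting the drift $-c\alpha_t(f(x_t)-f^*)$ pay for the increment $\eta_{t+1}-\eta_t$ directly. These are the same weight in two disguises, and your bookkeeping is correct: bounding the residue by $c_1\alpha_t^2\eta_{t+1}X_t\le c_1\alpha_t^2\frac{\eta_{t+1}}{\eta_t}U_t$ gives $\gamma_t=O(\alpha_t^2)$, and $Z_t=O(\alpha_t^2\eta_t)=O(t^{-1-3\eps})$ is summable, so Proposition \ref{prop:super} applies. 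Just make sure the constant in $\eta_t$ does not exceed the drift coefficient actually retained after absorbing the $A\alpha_t^2(f(x_t)-f^*)$ and $B\alpha_t^2\norm{\nabla f(x_t)}^2$ terms. Your one-step version is arguably cleaner; the paper's two-step version has the advantage that Lemma \ref{lem:convex} is reusable verbatim for SHB and SNAG.

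The one concrete gap is your transfer step for SHB/SNAG. Inequality (\ref{shb:est3}) bounds $f(x_t)-f^*$ by $f(z_t)-f^*$ plus $\frac12\norm{\nabla f(z_t)}^2$ plus a constant (not $O(\alpha_t^2)$) multiple of $\norm{v_t}^2$; the gradient term is handled by $\norm{\nabla f(z_t)}^2\le 2L(f(z_t)-f^*)$, but the $\norm{v_t}^2$ term requires a last-iterate \emph{rate} $\norm{v_t}^2=O\bigl(t^{-(\frac13-\eps)}\bigr)$, and the almost sure summability $\sum_t\norm{v_t}^2<\infty$ that you invoke does not deliver that: summability controls best iterates and subsequences, not the last iterate. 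Your composite $\tilde U_t=\tilde V_t+\eta_t(f(z_t)-f^*)$ produces a rate only for $f(z_t)-f^*$. The fix stays inside your framework: weight $\norm{v_t}^2$ as well, i.e., take $\tilde U_t=\tilde V_t+\eta_t\bigl(f(z_t)-f^*+\kappa'\norm{v_t}^2\bigr)$; the geometric contraction $\E_t\norm{v_{t+1}}^2\le\lambda\norm{v_t}^2+O(\alpha_t^2)(\cdots)$ with $\lambda<1$ easily absorbs the slowly varying factor $\eta_{t+1}/\eta_t\to1$, so the weighted $\norm{v_t}^2$ term remains contractive. This is exactly what the paper does implicitly by applying Lemma \ref{lem:convex} to $Y_t=f(z_t)-f^*+\norm{v_t}^2$, which yields the $O\bigl(t^{-(\frac13-\eps)}\bigr)$ rate for the sum and hence for $\norm{v_t}^2$ individually.
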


	\begin{remark}
		While this appears to the first result on last-iterate almost sure convergence rates for SGD and SNAG, the rate $O\left(\frac{1}{t^{\frac13-\eps}}\right)$ is not close to the lower bound obtained for convergence in expectation \citep{agarwal2012information}. Note that most convergence rates for SGD on general convex function are derived for a weighted average of the iterates. An interesting observation was made in   \cite{orabona2020last} and the author derived a non-asymptotic last-iterate convergence rate of $O\left(\frac{\log(T)}{\sqrt{T}}\right)$ in expectation. It is unclear at this point whether the idea in \cite{orabona2020last} can be extended to yield a close-to-optimal asymptotic almost sure convergence rate. It would be interesting to investigate whether the law of the  iterated logarithm for martingales \citep{stout1970martingale,de2004self,balsubramani2014sharp} can help determine the sharpest convergence rates in this setting. 
	\end{remark}
	
	\section{Conclusions}
	
	In this paper, we have provided a streamlined analysis of almost sure convergence rates for stochastic gradient methods, including SGD, SHB, and SNAG. The rates obtained for strongly convex functions are arbitrarily close to their corresponding optimal rates. For non-convex functions, the rates obtained for the \textit{best} iterates are close to the optimal convergence rates in expectation for general convex functions \citep{agarwal2012information}. For general convex functions, we identified a gap between the last-iterate almost sure convergence rates obtained and the possible optimal rates. Whether it is possible and how to close this gap can be an interesting topic for future work. 
	
	\acks{This work is partially supported by the NSERC Canada Research Chairs (CRC) program, an NSERC Discovery Grant, an Ontario Early Researcher Award (ERA), and the Jiangsu Industrial Technology Research Institute (JITRI) through a JITRI-Waterloo project.}

	\bibliography{colt22}
	
	\newpage 
	\appendix

	\section{Proof of Lemma \ref{lem:strong}}
	\label{app:lem:strong}
	
	\begin{proof}
		By the choice of $\alpha_t$, there exists some $\eta>0$ such that $c_1\alpha_t\ge \frac{\eta}{t^{1-\theta}}$ for all $t\ge 1$. 
		We shall make use of the elementary inequality 
		\begin{equation}
			\label{ieq:basic}
			(t+1)^{1-\eps}\le t^{1-\eps}+(1-\eps)t^{-\eps},
		\end{equation}
		which can be proved, for instance, as follows. Let $g(x)=x^{1-\eps}$. Then $g'(x)=(1-\eps)x^{-\eps}$ is decreasing. By the mean value theorem,  
		$$
		(t+1)^{1-\eps} - t^{1-\eps} = g'(\xi) \le g'(t) = (1-\eps)t^{-\eps},
		$$
		where $\xi \in (t,t+1)$, which implies inequality (\ref{ieq:basic}). Multiplying (\ref{ineq:Yt}) with $(t+1)^{1-\eps}$ and applying inequality (\ref{ieq:basic}) lead to
		\begin{align*}
			\mathbb{E}[(t+1)^{1-\eps}Y_{t+1}\,\vert\, \mathcal{F}_t] &\le (t+1)^{1-\eps}(1-c_1\alpha_t)Y_t + c_2(t+1)^{1-\eps}\alpha_t^2\\
			&\le  \left[t^{1-\eps}+(1-\eps)t^{-\eps}\right]\left(1-\frac{\eta}{t^{1-\theta}}\right) Y_t + c_2(t+1)^{1-\eps}\alpha_t^2 \\
			&= \left(1+\frac{1-\eps}{t}\right)\left(1-\frac{\eta}{t^{1-\theta}}\right) t^{1-\eps} Y_t+c_2(t+1)^{1-\eps}\alpha_t^2\\
			& = \left[1+\frac{1-\eps}{t}-\frac{\eta}{t^{1-\theta}}-\frac{\eta(1-\eps)}{t^{2-\theta}}\right] t^{1-\eps} Y_t+c_2(t+1)^{1-\eps}\alpha_t^2. 
		\end{align*}
		Clearly, as $t\ra\infty$, the dominating term in $\frac{1-\eps}{t}-\frac{\eta}{t^{1-\theta}}-\frac{\eta(1-\eps)}{t^{2-\theta}}$ is $-\frac{\eta}{t^{1-\theta}}$. Hence, there exists some $T>1$ sufficiently large such that, for all $t\ge T$, 
		\begin{align*}
			\mathbb{E}[(t+1)^{1-\eps}Y_{t+1}\,\vert\, \mathcal{F}_t] &\le   t^{1-\eps} Y_t - \frac{\eta}{2t^{1-\theta}}t^{1-\eps} Y_t  +c_2(t+1)^{1-\eps}\alpha_t^2. 
		\end{align*}
		With $\hat{Y}_t=t^{1-\eps}Y_{t}$, $X_t=\frac{\eta}{2t^{1-\theta}}t^{1-\eps} Y_t$, $Z_t=c_2(t+1)^{1-\eps}\alpha_t^2=\Theta\left(\frac{1}{t^{1+\eps-2\theta}}\right)$, and $\gamma_t=0$, the conditions of Proposition \ref{prop:super} are met for all $t\ge $T with $\hat{Y}_t$ in place of $Y_t$. By Proposition \ref{prop:super}, we have $t^{1-\eps} Y_t$ converges and $\sum_{t=T}^{\infty}{X_t}<\infty$ almost surely. We must have $t^{1-\eps} Y_t\ra 0$ almost surely, since $\sum_{t=T}^\infty \frac{\eta}{2t^{1-\theta}}=\infty$. The conclusion follows. 
	\end{proof}

	\section{Proof of Lemma \ref{lem:weak}}
	\label{app:lem:weak}
	
	\begin{proof}
		Note that $w_1=2$ and $Y_2=Y_1$. Since $\alpha_t$ is monotonically decreasing, $w_t\in [0,1]$ for $t\ge 2$. It follows that, for each $t\ge 2$, $Y_t$ is a weighted average of all numbers in $\set{X_1,\cdots,X_{t-1}}$. Furthermore, by (\ref{eq:Yt}) we have
		\begin{equation}\label{eq:Yt2}
			Y_{t+1}\sum_{i=1}^t\alpha_i = Y_{t}\sum_{i=1}^{t-1}\alpha_i - \alpha_t Y_t + 2\alpha_t X_t,\quad t\ge 1.
		\end{equation}
		Let $\hat{Y}_t=Y_{t}\sum_{i=1}^{t-1}\alpha_i$. Then conditions of Proposition \ref{prop:super} are met with $\hat{Y}_t$ in place of $Y_t$, $-\alpha_tY_t$ in place of $Y_t$, and $2\alpha_2X_t$ in place of $Z_t$. It follows from Proposition \ref{prop:super} that $Y_{t+1}\sum_{i=1}^t\alpha_i$ converges\footnote{While no random sequences are involved here, Proposition \ref{prop:super} is still applicable with almost sure convergence replaced by convergence. A direct proof is possible using the monotone convergence theorem for real numbers.} and $\sum_{t=1}^\infty \alpha_t Y_t <\infty$. Since $\sum_{t=1}^\infty \frac{\alpha_t}{\sum_{i=1}^{t-1}\alpha_i}=\infty$, $\sum_{t=1}^\infty \alpha_t Y_t <\infty$, and  $\lim_{t\ra\infty}\frac{\alpha_t Y_t}{\frac{\alpha_t}{\sum_{i=1}^{t-1}\alpha_i}}=\lim_{t\ra\infty}Y_{t}\sum_{i=1}^{t-1}\alpha_i$ exists, we must this limit equal 0 by the limit comparison test for series. Hence $Y_t = o\left( \frac{1}{\sum_{i=1}^{t-1}\alpha_i} \right)$. The other part of the conclusion follows by noting 
		$\min_{1\le i\le t-1} X_i \le Y_{t}$, because $Y_t$ is a weighted average of  $\set{X_1,\cdots,X_{t-1}}$. 
	\end{proof}
	
	\section{Proof of Theorem \ref{thm:last-iterate}} \label{app:claims}

	\textbf{Proof of Claim 1:} It is straightforward to verify by definition that it is a martingale. It is well known (see, e.g., \cite[Theorem 12.1]{williams1991probability}) that $M_t$ is bounded in $\mathcal{L}^2$ if and only if 
	$$
	\sum_{t=1}^\infty\E[\norm{M_t-M_{t-1}}^2] <\infty.
	$$  
	The latter is verified by
	\begin{align}
		\sum_{t=1}^\infty\E[\norm{M_t-M_{t-1}}^2]  &= \sum_{t=1}^\infty \alpha_t^2 (\E \norm{g_t}^2 - \E\norm{\nabla f(x_t)}^2)\notag \\
		&\le \sum_{t=1}^\infty \alpha_t^2 \left[A (\E[f(x_t)] - f^*) + (B-1) \E\norm{\nabla f(x_t)}^2 + C\right], \label{eq:em}
	\end{align}
	where we used Assumption \ref{as:abc}. Following the same argument as in the proof of Theorem \ref{thm:shb}, except that we take expectation on all the inequalities involved, we can show that $\E[f(x_t)] - f^*$ converges as $t\ra\infty$ and $\sum_{t=1}^\infty \alpha_t \E\norm{\nabla f(x_t)}^2<\infty$.  Since $\sum_{t=1}^\infty \alpha_t^2<\infty$, we have $\alpha_t\ra 0$ as $t\ra 0$.  By comparing the series on the right-hand side of (\ref{eq:em}) with convergent series $\sum_{t=1}^\infty \alpha_t^2$ and $\sum_{t=1}^\infty \alpha_t \E\norm{\nabla f(x_t)}^2$, we conclude that $\sum_{t=1}^\infty\E[\norm{M_t-M_{t-1}}^2] <\infty$.

	\noindent\textbf{Proof of Claim 2:} By $L$-smoothness of $f$, we have 
	\begin{align}
		\sum_{i=1}^t \norm{\alpha_i (\nabla f(x_i)-\nabla f(z_i))}  & \le \sum_{i=1}^t  \alpha_i L \norm{x_i-z_i} = \frac{L\beta}{1-\beta}\sum_{i=1}^t  \alpha_i\norm{v_i}\\
		& \le \frac{L\beta}{1-\beta} \sqrt{\sum_{i=1}^t \alpha_i^2}\sqrt{\sum_{i=1}^t \norm{v_i}^2}. \label{eq:N}
	\end{align}
	It follows that $N_t$ converges almost surely, provided that $\sum_{t=1}^\infty \norm{v_t}^2<\infty$ almost surely. To show the latter, recall  (\ref{shb:est5}) as 
	\begin{align}
		\E_t \left[f(z_{t+1}) -f^* + \norm{v_{t+1}}^2\right] &\le  (1+c_6\alpha_t^2)[f(z_t) - f^* + \norm{v_t}^2]  - (1-\lambda)\norm{v_t}^2  \notag \\
		&\qquad - c\alpha_t\norm{\nabla f(z_t)}^2 + c_4\alpha_t^2, \label{shb:est6}
	\end{align}
	where $c_6=\max(c_1,c_2)$. Proposition \ref{prop:super} implies that $\sum_{t=1}^\infty \norm{v_t}^2<\infty$ almost surely.

	\section{Proof of Theorem \ref{thm:last-iterate-rates}}
	\label{app:last-iterate-rates}

	\begin{lemma}\label{lem:convex}
		Suppose that $Y_t$ is a sequence of nonnegative random variables that are adapted to a filtration $\set{\mathcal{F}_t}$. Let $\set{\alpha_t}$ be a sequence chosen as $\alpha_t=\Theta\left( \frac{1}{t^{\frac23+\eps}}\right)$ (for $t\ge 1$), where $\eps\in (0,\frac13)$. If 
		\begin{align}
			\mathbb{E}[Y_{t+1}\,\vert\, \mathcal{F}_t] &\le (1+c_1\alpha_t^2)Y_t + c_2\alpha_t^2, \label{Y:est}
		\end{align}	
		for some constants $c_1,c_2>0$ and $\sum_{t=1}^\infty \alpha_t Y_t<\infty$ almost surely, then $Y_t=O\left(\frac{1}{t^{\frac13-\eps}}\right)$ almost surely. 
	\end{lemma}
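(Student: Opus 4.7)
The plan is to apply Proposition \ref{prop:super} to the scaled sequence $\hat Y_t := t^{\frac{1}{3}-\eps} Y_t$; once I show $\hat Y_t$ converges almost surely to a finite limit, boundedness of $\hat Y_t$ delivers the stated rate $Y_t = O(1/t^{\frac{1}{3}-\eps})$ immediately.

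To produce a recursion for $\hat Y_t$, I would multiply (\ref{Y:est}) through by $(t+1)^{\frac{1}{3}-\eps}$ and invoke the same mean-value inequality used to derive (\ref{ieq:basic}) in Appendix \ref{app:lem:strong}, namely $(t+1)^{\rho} \le t^{\rho} + \rho\,t^{\rho-1}$ for $\rho \in (0,1)$ (valid with $\rho = \frac{1}{3}-\eps$ since $x \mapsto x^\rho$ has a decreasing derivative). After collecting terms this gives
\begin{equation*}
\mathbb{E}[\hat Y_{t+1} \mid \mathcal{F}_t] \le (1+c_1\alpha_t^2)\,\hat Y_t \;+\; (\tfrac{1}{3}-\eps)(1+c_1\alpha_t^2)\,t^{-\frac{2}{3}-\eps}\,Y_t \;+\; c_2\,(t+1)^{\frac{1}{3}-\eps}\alpha_t^2.
\end{equation*}

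I would then match this to the template of Proposition \ref{prop:super} with $\gamma_t = c_1\alpha_t^2$, $X_t = 0$, and $Z_t$ equal to the sum of the two remainder terms, and check summability piece by piece. First, $\sum_t \gamma_t < \infty$ since $\alpha_t^2 = \Theta(1/t^{\frac{4}{3}+2\eps})$, so $\prod_t(1+\gamma_t)<\infty$. Second, the first component of $Z_t$ is of order $\alpha_t Y_t$ because $t^{-\frac{2}{3}-\eps} = \Theta(\alpha_t)$, and its sum is finite almost surely by the standing hypothesis $\sum_t \alpha_t Y_t < \infty$ a.s. Third, the second component of $Z_t$ is of order $t^{\frac{1}{3}-\eps}\cdot t^{-\frac{4}{3}-2\eps} = t^{-1-3\eps}$, which is deterministically summable because $\eps>0$. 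All three hypotheses of Proposition \ref{prop:super} are therefore met, yielding almost sure convergence (hence almost sure boundedness) of $\hat Y_t$ to a finite limit, which is exactly the desired conclusion.

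The only real subtlety is calibrating the scaling exponent: $\frac{1}{3}-\eps$ is chosen precisely so that the cross-term produced by expanding $(t+1)^{\frac{1}{3}-\eps}$ lands exactly at order $\alpha_t Y_t$, matching the available hypothesis. A larger scaling would generate a term of order $t^{-r} Y_t$ with $r < \frac{2}{3}+\eps$, falling outside the summability available from $\sum_t \alpha_t Y_t < \infty$, and the Robbins--Siegmund step would no longer apply; a smaller scaling would yield a suboptimal rate. Everything else amounts to routine bookkeeping of constants and orders, so I do not anticipate any genuine obstacle beyond this exponent-matching observation.
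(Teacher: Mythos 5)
Your proposal is correct and follows essentially the same route as the paper's own proof: multiply (\ref{Y:est}) by $(t+1)^{\frac13-\eps}$, expand via the mean-value inequality $(t+1)^{\rho}\le t^{\rho}+\rho t^{\rho-1}$, and apply Proposition \ref{prop:super} to $t^{\frac13-\eps}Y_t$ with $\gamma_t=c_1\alpha_t^2$, $X_t=0$, and $Z_t$ consisting of an $O(\alpha_t Y_t)$ term (summable by hypothesis) plus an $O(t^{-1-3\eps})$ term. Your exponent-matching observation — that $\frac13-\eps$ is calibrated so the cross-term lands at order $\alpha_t Y_t$ — is exactly the mechanism at work in the paper's argument.
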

	
	\begin{proof} Suppose that 
		$$
		\frac{\eta_1}{t^{\frac23+\eps}}\le \alpha_t\le\frac{\eta_2}{t^{\frac23+\eps}},\quad \forall t\ge 1,
		$$ 
		with some positive constants $\eta_1$ and $\eta_2$. Multiplying both sides of (\ref{Y:est}) by $(1+t)^{\frac13-\eps}$ leads to 
		\begin{align*}
			\E_t [(1+t)^{\frac13-\eps} Y_{t+1} \,\vert\, \mathcal{F}_t] & \le (1+t)^{\frac13-\eps}(1+c_1\alpha_t^2)Y_t + c_2(1+t)^{\frac13-\eps}\alpha_t^2\\
			&\le \left[t^{\frac13-\eps} + \left(\frac13-\eps\right)t^{-\frac23-\eps}\right](1+c_1\alpha_t^2)Y_t + c_2(1+t)^{\frac13-\eps}\alpha_t^2\\
			& = (1+c_1\alpha_t^2)t^{\frac13-\eps} Y_t  + \left(\frac13-\eps\right)t^{-\frac23-\eps}(1+c_1\alpha_t^2) Y_t + c_2(1+t)^{\frac13-\eps}\alpha_t^2 \\
			& \le (1+c_1\alpha_t^2)t^{\frac13-\eps} Y_t +(c_1\eta_2^2+1)\left(\frac13-\eps\right)t^{-\frac23-\eps}Y_t + \frac{c_2\eta_2^2}{t^{1+3\eps}}\frac{(1+t)^{\frac13-\eps}}{t^{\frac13-\eps}}\\
			&\le (1+c_1\alpha_t^2)t^{\frac13-\eps} Y_t +c_3\alpha_t Y_t + \frac{c_4}{t^{1+3\eps}},
		\end{align*}
		where we can take $c_3=(c_1\eta_2^2+1)\left(\frac13-\eps\right)/\eta_1$ and $c_4=c_2\eta_2^2\sqrt[3]{2}$. Recall that $\sum_{t=1}^\infty \alpha_t Y_t<\infty$. Applying Proposition \ref{prop:super} with $t^{\frac13-\eps} Y_t$ in place of $Y_t$, $X_t=0$, and $Z_t=c_3\alpha_t Y_t + \frac{c_4}{t^{1+3\eps}}$, we have $\sum_{t=1}^\infty Z_t< \infty$ and $t^{\frac13-\eps} Y_t$ converges almost surely. The conclusion follows. 
	\end{proof}
	
	With this lemma, we are ready to present the proof of Theorem \ref{thm:last-iterate-rates}. 
	
	\begin{proof} 1) We show the proof for SGD first. By smoothness of $f$ and (\ref{eq:L}), we have
		\begin{align*}
			f(x_{t+1}) \le f(x_t) - \alpha_t\inner{\nabla f(x_t),g_t} + \frac{L\alpha_t^2}{2} \norm{g_t}^2.
		\end{align*}
		Taking conditional expectation w.r.t. $x_t$, denoted by $\E_t[\cdot]:=\E[\cdot \vert x_t]$, leads to
		\begin{align}
			\E_t \left[f(x_{t+1})-f^*\right] &\le f(x_t)-f^* - \alpha_t \norm{\nabla f(x_t)}^2 + \frac{L\alpha_t^2}{2}\left[A(f(x_t)-f^*)+B\norm{\nabla f(x_t)}^2+C\right]\notag\\
			&\le (1+\frac{LA\alpha_t^2}{2})(f(x_t)-f^*) - \left(\alpha_t -\frac{LB\alpha_t^2}{2} \right)\norm{\nabla f(x_t)}^2 + \frac{LC\alpha_t^2}{2} \notag\\
			&\le (1+\frac{LA\alpha_t^2}{2})(f(x_t)-f^*) - \frac{1}{2}\alpha_t\norm{\nabla f(x_t)}^2 + \frac{LC\alpha_t^2}{2}, \label{sgd:est11}
		\end{align}
		provided that $LB\alpha_t\le 1$. 
		
		Let $x_*$ be a minimizer, i.e., $f(x_*)=f^*$. We have
		$$
		\norm{x_{t+1}-x^*}^2 = \norm{x_t - x_*}^2 - 2\alpha_t \inner{g_t,x_t-x_*} + \alpha_t^2 \norm{g_t}^2. 
		$$
		Take conditional expectation w.r.t. $x_t$ from both side. By convexity of $f$, we obtain 
		\begin{align}
			\E_t \left[\norm{x_{t+1}-x^*}^2 \right] &= \norm{x_t - x_*}^2 - 2
			\alpha_t\inner{\nabla f(x_t),x_t-x_*}\notag\\
			&\qquad + \alpha_t^2 \left[A(f(x_t)-f^*)+B\norm{\nabla f(x_t)}^2+C\right]\notag\\
			&\le \norm{x_t - x_*}^2  - 2\alpha_t\left(f(x_t)-f^* + \frac{1}{2L}\norm{\nabla f(x_t)}^2\right) \notag\\
			&\qquad + \alpha_t^2 \left[A(f(x_t)-f^*)+B\norm{\nabla f(x_t)}^2+C\right]\notag \\
			& = \norm{x_t - x_*}^2 - (2\alpha_t-A\alpha_t^2)(f(x_t)-f^*)-\left(\frac{1}{L}\alpha_t-B\alpha_t^2\right)\norm{\nabla f(x_t)}^2\notag\\
			&\qquad + \alpha_t^2 C \notag\\
			& \le \norm{x_t - x_*}^2 - \alpha_t (f(x_t)-f^*) + \alpha_t^2 C, \label{sgd:est22}
		\end{align}
		provided that $A\alpha_t\le 1$, in addition to $LB\alpha_t\le 1$. 
		
		By (\ref{sgd:est11}) and Proposition \ref{prop:super}, $\sum_{t=1}^\infty \alpha_t\norm{\nabla f(x_t)}^2 <\infty$ almost surely and $f(x_t)$ converges almost surely. By (\ref{sgd:est22}) and Proposition \ref{prop:super}, $\sum_{t=1}^\infty \alpha_t (f(x_t)-f^*) <\infty$ almost surely and $\norm{x_{t+1}-x^*}$ converges almost surely. Since $\sum_{t=1}^\infty \alpha_t=\infty$, we have  $f(x_t)$ converges to $f^*$ almost surely. By almost sure convergence of $\norm{x_{t+1}-x^*}$, $\set{x_t}$ almost surely has a convergent subsequence. The limit of this subsequence, denoted by $x(\omega)$ must satisfy $f(x(\omega))=f^*$. Hence $x(\omega)$ is also a minimizer. Since the choice of minimizer in (\ref{sgd:est22}) is arbitrary, we must have $x_t$ converges almost surely to some random variable. It follows that $\nabla f(x_t)$ exists almost and the limit must be 0 (either by using the fact that the limit of $x_t$ is a minimizer almost surely or that $\sum_{t=1}^\infty \alpha_t\norm{\nabla f(x_t)}^2 <\infty$ and $\sum_{t=1}^\infty \alpha_t=\infty$). 
		
		We now derive a concrete convergence rate for 
		$f(x_t)-f^*$. Let $Y_t=f(x_t)-f^*$. By (\ref{sgd:est11}) (and dropping the term $-\frac12\alpha_t\norm{\nabla f(x_t)}^2$), (\ref{Y:est}) of Lemma \ref{lem:convex} holds with $c_1=\frac{LA}{2}$ and $c_2=\frac{LC}{2}$. The conclusion follows from that of Lemma \ref{lem:convex}. 
		
		2) We now prove the case for SHB. Recall (\ref{shb:est5}) as 
		\begin{align}
			\E_t \left[f(z_{t+1}) -f^* + \norm{v_{t+1}}^2\right] &\le  (1+c_6\alpha_t^2)[f(z_t) - f^* + \norm{v_t}^2]  - (1-\lambda)\norm{v_t}^2  \notag \\
			&\qquad - c\alpha_t\norm{\nabla f(z_t)}^2 + c_4\alpha_t^2, \label{shb:est7}
		\end{align}
		where $c_6=\max(c_1,c_2)$ defined in (\ref{shb:est5}). Proposition \ref{prop:super} implies that $\sum_{t=1}^\infty \norm{v_t}^2<\infty$, $f(z_{t}) -f^*$ converges, and $\sum_{t=1}^\infty \alpha_t\norm{\nabla f(z_t)}^2<\infty$, almost surely. 
		
		Similar to (\ref{sgd:est22}), by convexity of $f$ and iterates of SHB in (\ref{eq:shb2}), we obtain
		\begin{align}
			\E_t \left[\norm{z_{t+1}-x^*}^2 \right] &= \norm{z_t - x_*}^2 - \frac{2
				\alpha_t}{1-\beta}\inner{\nabla f(x_t),z_t-x_*}\notag\\
			&\qquad  + \alpha_t^2 \left[A(f(x_t)-f^*)+B\norm{\nabla f(x_t)}^2+C\right]\notag\\
			&= \norm{z_t - x_*}^2 - \frac{2
				\alpha_t}{1-\beta}\inner{\nabla f(z_t),z_t-x_*} + \frac{2
				\alpha_t}{1-\beta}\inner{\nabla f(z_t)-f(x_t),z_t-x_*}\notag\\
			&\qquad  + \alpha_t^2 \left[A(f(x_t)-f^*)+B\norm{\nabla f(x_t)}^2+C\right]\notag\\
			&\le \norm{z_t - x_*}^2  - 2\alpha_t\left(f(z_t)-f^* + \frac{1}{2L}\norm{\nabla f(z_t)}^2\right) +\frac{\beta^2L^2}{(1-\beta)^4}\norm{v_t}^2\notag\\
			&\qquad + \alpha_t^2\norm{z_t-x_*}^2 + \alpha_t^2 \left[A(f(x_t)-f^*)+B\norm{\nabla f(x_t)}^2+C\right]\notag \\
			& \le (1+\alpha_t^2)\norm{z_t - x_*}^2 - (2\alpha_t-c_7\alpha_t^2)(f(z_t)-f^*)\notag\\
			&\qquad -\left(\frac{1}{L}\alpha_t-c_8\alpha_t^2\right)\norm{\nabla f(z_t)}^2 +c_9 \norm{v_t}^2 + \alpha_t^2 C, \label{sgd:est33}
		\end{align}
		where $c_7$, $c_8$, and $c_9$ are some positive constants. The first inequality above follows from convexity of $f$, $L$-Lipschitzness of $\nabla f$, and the elementary inequality $2\inner{a,b}\le \norm{a}^2+\norm{b}^2$. The second inequality follows from (\ref{shb:est3}). By (\ref{sgd:est33}), choosing $\alpha_t$ sufficiently small leads to  
		\begin{align}
			\E_t \left[\norm{z_{t+1}-x^*}^2 \right] \le (1+\alpha_t^2)\norm{z_t - x_*}^2 - \alpha_t(f(z_t)-f^* + \norm{v_t}^2) +c_{10} \norm{v_t}^2 + \alpha_t^2 C, \label{sgd:est44}
		\end{align}
		where $\alpha_t+c_9\le c_{10}$. Since $\sum_{t=1}^\infty \norm{v_t}^2<\infty$, Proposition \ref{prop:super} implies $\sum_{t=1}^\infty  \alpha_t (f(z_t)-f^* + \norm{v_t}^2)<\infty$ and $\norm{z_t - x_*}^2$ converges almost surely.  By a similar argument as in the proof for SGD, we have $z_t$ converges to a minimizer almost surely. To obtain a concrete convergence rate, let $Y_t=f(z_t)-f^* + \norm{v_t}^2$. By the choice of $\alpha_t$ and Lemma \ref{lem:convex}, we have
		$$
		Y_t = f(z_t)-f^* + \norm{v_t}^2 = O\left(\frac{1}{t^{\frac13-\eps}}\right). 
		$$
		From (\ref{shb:est3}) and the fact that
		$$
		\norm{\nabla f(z_t)}^2 \le 2L (f(z_{t})-f^*),
		$$
		we obtain 
		$$
		f(x_{t}) -f^* = O\left(\frac{1}{t^{\frac13-\eps}}\right). 
		$$
		
		3) The case for SNAG is very similar in view of (\ref{snag:est1}) and omitted. 
	\end{proof}

\end{document}